
\documentclass{article}

\usepackage{microtype}
\usepackage{graphicx}
\usepackage{subfigure}
\usepackage{booktabs} 

\usepackage{hyperref}



\usepackage[accepted]{icml2020}

\usepackage[utf8]{inputenc} 
\usepackage[T1]{fontenc}    
\usepackage{url}            
\usepackage{amsfonts}       
\usepackage{nicefrac}       
\usepackage{amsmath}


\usepackage{amsmath,amsfonts,bm}









\def\eqref#1{equation~\ref{#1}}









\def\1{\bm{1}}










\DeclareMathAlphabet{\mathsfit}{\encodingdefault}{\sfdefault}{m}{sl}
\SetMathAlphabet{\mathsfit}{bold}{\encodingdefault}{\sfdefault}{bx}{n}













\usepackage{amsthm}

\usepackage{comment}
\usepackage{multirow}

\usepackage{wrapfig}

\usepackage{footnote}
\usepackage{breqn}
\makesavenoteenv{tabular}
\makesavenoteenv{table}
\makeatletter
\newcommand\footnoteref[1]{\protected@xdef\@thefnmark{\ref{#1}}\@footnotemark}
\makeatother

\newtheorem{theorem}{Theorem}
\newtheorem{definition}{Definition}
\newtheorem{lemma}{Lemma}

\newtheorem{proposition}{Proposition}
\graphicspath{{img/}}


\icmltitlerunning{On Layer Normalization in the Transformer Architecture}

\begin{document}

\twocolumn[
\icmltitle{On Layer Normalization in the Transformer Architecture}



\icmlsetsymbol{equal}{*}

\begin{icmlauthorlist}
\icmlauthor{Ruibin Xiong\textsuperscript{\dag}\textsuperscript{*}}{cas,ucas}
\icmlauthor{Yunchang Yang\textsuperscript{*}}{pku-bigdata}
\icmlauthor{Di He}{pku-moe,ms}
\icmlauthor{Kai Zheng}{pku-moe}
\icmlauthor{Shuxin Zheng}{ms}
\icmlauthor{Chen Xing}{nankai}
\icmlauthor{Huishuai Zhang}{ms}
\icmlauthor{Yanyan Lan}{cas,ucas}
\icmlauthor{Liwei Wang}{pku-moe,pku-bigdata}
\icmlauthor{Tie-Yan Liu}{ms}
\end{icmlauthorlist}

\icmlaffiliation{cas}{CAS Key Laboratory of Network Data Science and Technology, Institute of Computing Technolog, Chinese Academy of Sciences}
\icmlaffiliation{ucas}{University of Chinese Academy of Sciences}
\icmlaffiliation{pku-moe}{Key Laboratory of Machine Perception, MOE, School of EECS, Peking University}
\icmlaffiliation{pku-bigdata}{Center for Data Science, Peking University, Beijing
Institute of Big Data Research}

\icmlaffiliation{ms}{Microsoft Research}
\icmlaffiliation{nankai}{College of Computer Science, Nankai University}
\icmlcorrespondingauthor{Shuxin Zheng}{shuxin.zheng@microsoft.com}
\icmlcorrespondingauthor{Di He}{dihe@microsoft.com}

\icmlkeywords{Machine Learning, ICML}

\vskip 0.3in
]



\printAffiliationsAndNotice{\textsuperscript{*}Equal contribution \textsuperscript{\dag}Works done while interning at Microsoft Research Asia} 

\begin{abstract}
The Transformer is widely used in natural language processing tasks. To train a Transformer however, one usually needs a carefully designed learning rate warm-up stage, which is shown to be crucial to the final performance but will slow down the optimization and bring more hyper-parameter tunings. In this paper, we first study theoretically why the learning rate warm-up stage is essential and show that the location of layer normalization matters. Specifically, we prove with mean field theory that at initialization, for the original-designed Post-LN Transformer, which places the layer normalization between the residual blocks, the expected gradients of the parameters near the output layer are large. Therefore, using a large learning rate on those gradients makes the training unstable. The warm-up stage is practically helpful for avoiding this problem. On the other hand, our theory also shows that if the layer normalization is put inside the residual blocks (recently proposed as Pre-LN Transformer), the gradients are well-behaved at initialization. This motivates us to remove the warm-up stage for the training of Pre-LN Transformers. We show in our experiments that Pre-LN Transformers without the warm-up stage can reach comparable results with baselines while requiring significantly less training time and hyper-parameter tuning on a wide range of applications.
\end{abstract}

\section{Introduction} 
The Transformer \citep{vaswani2017attention} is one of the most commonly used neural network architectures in natural language processing. Layer normalization \citep{lei2016layer} plays a key role in Transformer's success. The originally designed Transformer places the layer normalization between the residual blocks, which is usually referred to as the Transformer with Post-Layer Normalization (Post-LN) \citep{wang2019learning}. This architecture has achieved state-of-the-art performance in many tasks including language modeling \citep{dai2019transformer,al2018character} and machine translation \citep{dehghani2018universal,edunov2018understanding}. Unsupervised pre-trained models based on the Post-LN Transformer architecture also show impressive performance in many downstream tasks \citep{radford2019language,devlin2018bert, yang2019xlnet}. 

Despite its great success, people usually need to deal with the optimization of the Post-LN Transformer more carefully than convolutional networks or other sequence-to-sequence models \citep{popel2018training}. In particular, to train the model from scratch, any gradient-based optimization approach requires a learning rate warm-up stage \citep{vaswani2017attention,liu2019variance}: the optimization starts with an extremely small learning rate, and then gradually increases it to a pre-defined maximum value in a pre-defined number of iterations. Such a warm-up stage not only slows down the optimization process but also brings more hyperparameter tunings. \citet{popel2018training} has shown that the final model performance is quite sensitive to the value of the maximum learning rate and the number of warm-up iterations. Tuning such sensitive hyper-parameters is costly in training large-scale models, e.g., BERT \citep{devlin2018bert} or XLNet \citep{yang2019xlnet}. 

In this paper, we try to alleviate this problem by finding ways to safely remove the learning rate warm-up stage. As the warm-up stage happens in the first several iterations, we investigate the optimization behavior at initialization using mean field theory \citep{lee2017deep,xiao2018dynamical,yang2019mean,yang2019scaling,lee2019wide,zhang2019fixup}. According to our theoretical analysis, when putting the layer normalization between the residual blocks, the expected gradients of the parameters near the output layer are large. Therefore, without the warm-up stage, directly using a large learning rate to those parameters can make the optimization process unstable. Using a warm-up stage and training the model with small learning rates practically avoid this problem. Extensive experiments are provided to support our theoretical findings.

Our theory also shows that the layer normalization plays a crucial role in controlling the gradient scales. This motivates us to investigate whether there are some other ways of positioning the layer normalization that lead to well-behaved gradients. In particular, we study another variant, the Transformer with Pre-Layer Normalization (Pre-LN) \citep{baevski2018adaptive,child2019generating,wang2019learning}. The Pre-LN Transformer puts the layer normalization inside the residual connection and equips with an additional \emph{final-layer normalization} before prediction (Please see Figure \ref{fig:sketch} for the differences between the two variants of the Transformer architectures). We show that at initialization, the gradients are well-behaved without any exploding or vanishing for the Pre-LN Transformer both theoretically and empirically.  

Given the gradients are well-behaved in the Pre-LN Transformer, it is natural to consider removing the learning rate warm-up stage during training. We conduct a variety of experiments, including IWSLT14 German-English translation, WMT14 English-German translation, and BERT pre-training tasks. We show that, in all tasks, the learning rate warm-up stage can be safely removed, and thus, the number of hyper-parameter is reduced. Furthermore, we observe that the loss decays faster for the Pre-LN Transformer model. It can achieve comparable final performances but use much less training time. This is particularly important for training large-scale models on large-scale datasets.
\begin{figure}
  \begin{center}
    \includegraphics[scale=0.35]{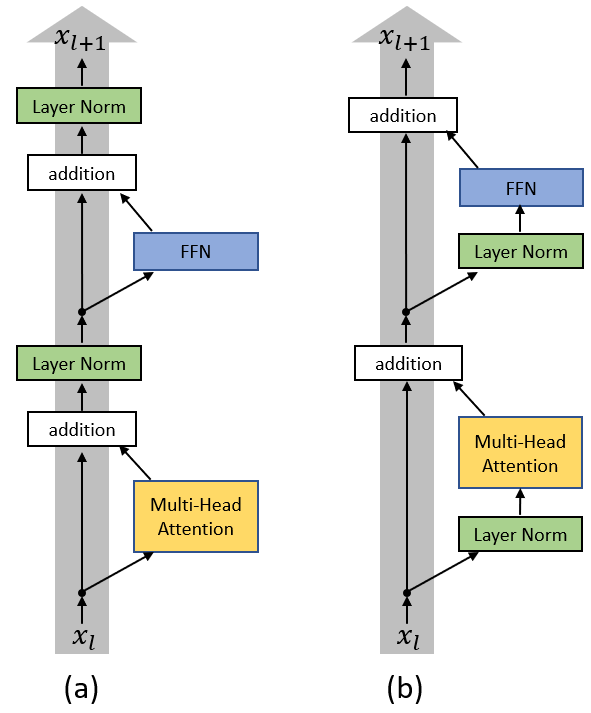}
    \vspace{-5pt}
    \caption{(a) Post-LN Transformer layer; (b) Pre-LN Transformer layer.}
  \end{center}
      \vspace{-25pt}
\end{figure}\label{fig:sketch}

Our contributions are summarized as follows: 

$\bullet$ We investigate two Transformer variants, the Post-LN Transformer and the Pre-LN Transformer, using mean field theory. By studying the gradients at initialization, we provide evidence to show why the learning rate warm-up stage is essential in training the Post-LN Transformer.
    
$\bullet$ We are the first to show that the learning-rate warm-up stage can be removed for the Pre-LN Transformer, which eases the hyperparameter tuning. We further show that by using proper learning rate schedulers, the training time can be largely reduced on a wide range of applications. 

\section{Related work}
Gradient descent-based methods \citep{kingma2014adam, zeiler2012adadelta, duchi2011adaptive,tieleman2012lecture} are popularly used in optimizing deep neural networks. For convolutional neural networks and recurrent neural networks, a relatively large learning rate is usually set in the beginning, and then decreased along with the optimization process \citep{he2016deep,he2017mask, sutskever2014sequence,gehring2017convolutional,he2019bag}. The learning rate warm-up stage has only been shown essential in dealing with some very specific problems, e.g., the large-batch training. \citet{goyal2017accurate,he2019bag,you2018imagenet} showed that a learning rate warm-up stage is preferred when training neural networks with extremely large batch sizes.

However, the learning rate warm-up stage is essential and critical when optimizing the Transformer models in a majority of scenarios \citep{vaswani2017attention, devlin2018bert, dai2019transformer, radford2019language, lu2019understanding}. \citet{popel2018training} investigated the influence of different warm-up strategies for the optimization of the Post-LN Transformer model and found that without or with relatively less warm-up iterations, the optimization diverges. The Pre-LN Transformer has been proposed in several recent works \citep{baevski2018adaptive,child2019generating,wang2019learning} to alleviate some optimization issues when training deeper models, but the troublesome warm-up stage still remains in their training pipelines. 

\citep{liu2019variance} claimed that the benefit of the warm-up stage comes from reducing the variance for the adaptive learning rate in the Adam optimizer \citep{kingma2014adam}. They proposed to rectify the variance of adaptive learning rate by a new variant of Adam called RAdam. However, we find that not only for Adam, the learning rate warm-up stage also helps quite a lot for other optimizers. This may indicate that Adam is not the prerequisite for the necessity of the warm-up stage. In a concurrent and independent work, \citet{nguyen2019transformers} also empirically observed that the Pre-LN Transformer can be trained without learning rate warm-up stage. Our work provides a more comprehensive study regrading this with a theoretical analysis. 

\section{Optimization for the Transformer}
\subsection{Transformer  with Post-Layer Normalization}
The Transformer architecture usually consists of stacked Transformer layers \citep{vaswani2017attention,devlin2018bert}, each of which takes a sequence of vectors as input and outputs a new sequence of vectors with the same shape. A Transformer layer has two sub-layers: the (multi-head) self-attention sub-layer and the position-wise feed-forward network sub-layer. Residual connection \citep{he2016deep} and layer normalization \citep{lei2016layer} are applied for both sub-layers individually. We first introduce each component of the Transformer layer and then present the entire architecture.

\paragraph{Self-attention sub-layer}
An attention function can be formulated as querying an entry with key-value pairs \citep{vaswani2017attention}. The self-attention sub-layer uses scaled dot-product attention, which is defined as: $\text{Attention}(Q,K,V)=\text{softmax}(\frac{QK^T}{\sqrt{d}})V$, where $d$ is the dimensionality of the hidden representations, and $Q$ (Query), $K$ (Key), $V$ (Value) are specified as the hidden representations of the previous layer. The multi-head variant of the self-attention sub-layer is popularly used which allows the model to jointly attend to information from different representation sub-spaces, and is defined as
\begin{align}
\text{Multi-head}(Q,K,V) ={}& \text{Concat} (\text{head}_1,\cdots,\text{head}_H)W^O \nonumber\\
\text{head}_k ={}& \text{Attention}(QW_k^Q, KW_k^K,VW_k^V), \nonumber
\end{align}
where $W_k^Q\in \mathbb{R}^{d \times d_K}, W_k^K\in \mathbb{R}^{d \times d_K}, W_k^V\in \mathbb{R}^{d\times d_V},$ and $W^O\in \mathbb{R}^{Hd_V \times d}$ are project parameter matrices, $H$ is the number of heads. $d_K$ and $d_V$ are the dimensionalities of Key and Value. Without any confusion, given a sequence of vectors $(x_1,...,x_n)$, we use $\text{MultiHeadAtt}(x_{i},[ x_{ 1}, x_{2},\cdots, x_{n}])$ as the multi-head self-attention mechanism on position $i$ which considers the attention from $x_i$ to the entire sequence, i.e., $\text{MultiHeadAtt}(x_{i},[ x_{ 1}, x_{2},\cdots, x_{n}])=\text{Multi-head}(x_i, [x_1, \dots, x_n], [x_1, \dots, x_n])$.

\paragraph{Position-wise FFN sub-layer} 
In addition to the self-attention sub-layer, each Transformer layer contains a fully connected network, which is applied to each position separately and identically. This sub-layer is a two-layer feed-forward network with a ReLU activation function. Given a sequence of vectors $h_1, ..., h_n$, the computation of a position-wise FFN sub-layer on any $h_i$ is defined as: 
\begin{align}
\text{FFN}(h_i)=\text{ReLU}(h_iW^1 + b^1)W^2 + b^2, \nonumber 
\end{align}
where $W^1$, $W^2$, $b^1$ and $b^2$ are parameters.

\paragraph{Residual connection and layer normalization}
Besides the two sub-layers described above, the residual connection and layer normalization are also key components to the Transformer. For any vector $v$, the layer normalization is computed as $\text{LayerNorm}(v) = \gamma \frac{v - \mu}{\sigma} + \beta$, in which $\mu, \sigma$ are the mean and standard deviation of the elements in $v$, i.e., $\mu = \frac{1}{d} \sum_{k=1}^d v_{k}$ and $\sigma^2 = \frac{1}{d}\sum_{k=1}^d(v_{k} -\mu)^2$. Scale $\gamma$ and bias vector $\beta$ are parameters.

Different orders of the sub-layers, residual connection and layer normalization in a Transformer layer lead to variants of Transformer architectures. One of the original and most popularly used architecture for the Transformer and BERT \citep{vaswani2017attention, devlin2018bert} follows ``self-attention (FFN) sub-layer $\rightarrow$ residual connection $\rightarrow$ layer normalization'', which we call the Transformer with Post-Layer normalization (Post-LN Transformer), as illustrated in Figure \ref{fig:sketch}.

\paragraph{Post-LN Transformer} Denote $x_{l, i}$ as the input of the $l$-th Transformer layer at position $i$, where $x_{l,i}$ is a real-valued vector of dimension $d$, $i=1,2,...,n$, $l=1,2,...,L$. $n$ is the length of the sequence and $L$ is the number of layers. For completeness, we define $x_{0,i}$ as the input embedding at position $i$ which is usually a combination of word embedding and positional embedding. The computations inside the $l$-th layer are composed of several steps, and we use super-scripts on $x$ to present the input(output) of different steps as in Table 1 (left), where $W^{1,l}$, $W^{2,l}$, $b^{1,l}$ and $b^{2,l}$ are parameters of the FFN sub-layer in the $l$-th layer.

\begin{table*}[tb]
\centering
\caption{Post-LN Transformer v.s. Pre-LN Transformer}
\vspace{2pt} 
\label{tbl:translation}
\centerline{
\begin{tabular}{ll}
\toprule
 Post-LN Transformer & Pre-LN Transformer\\
\midrule
 $x^{post,1}_{l,i} = \text{MultiHeadAtt}(x^{post}_{l, i},[ x^{post}_{l, 1}, \cdots, x^{post}_{l, n}])$ & $x^{pre,1}_{l,i}=\text{LayerNorm}(x^{pre}_{l,i})$  \\
 $x^{post,2}_{l,i}=x^{post}_{l,i}+x^{post,1}_{l,i}$ &$x^{pre,2}_{l,i}=\text{MultiHeadAtt}(x_{l, i}^{pre,1},[ x_{l, 1}^{pre,1}, \cdots, x_{l, n}^{pre,1}])$  \\
 $x^{post,3}_{l,i}=\text{LayerNorm}(x^{post,2}_{l,i})$ & $x^{pre,3}_{l,i}=x^{pre}_{l,i}+x^{pre,2}_{l,i}$ \\
 $x^{post,4}_{l,i}=\text{ReLU}(x^{post,3}_{l,i}W^{1,l} + b^{1,l})W^{2,l} + b^{2,l}$ & $x^{pre,4}_{l,i}=\text{LayerNorm}(x^{pre,3}_{l,i})$ \\
$x^{post,5}_{l,i}=x^{post,3}_{l,i} + x^{post,4}_{l,i}$ &  $x^{pre,5}_{l,i}=\text{ReLU}(x^{pre,4}_{l,i}W^{1,l} + b^{1,l})W^{2,l} + b^{2,l}$  \\
  $x^{post}_{l+1,i}=\text{LayerNorm}(x^{post,5}_{l,i})$ &$x^{pre}_{l+1,i}=x^{pre,5}_{l,i}+x^{pre,3}_{l,i}$ \\
\midrule
 & Final LayerNorm: $x_{Final,i}^{pre}\leftarrow\text{LayerNorm}(x_{L+1,i}^{pre})$ \\
\bottomrule
\end{tabular}   
}
\end{table*}
\subsection{The learning rate warm-up stage}
We are interested in the \emph{learning rate warm-up} stage in the optimization of the Post-LN Transformer. Different from the optimization of many other architectures in which the learning rate starts from a relatively large value and then decays \citep{bahdanau2017neural,dauphin2017language}, a learning rate warm-up stage for the Post-LN Transformer seems \textbf{critical} \citep{popel2018training}. We denote the learning rate of the $t$-th iteration as $\text{lr}(t)$ and the maximum learning rate during training as $\text{lr}_{max}$. Given a predefined time frame $T_{\text{warmup}}$, the learning rate scheduler for the first $T_{\text{warmup}}$ iterations \citep{tensor2tensor} is defined as 
\begin{align}\label{eqn:warm-up-lr}
\text{lr}(t)=\frac{t}{T_{\text{warmup}}}\text{lr}_{max}, t\leq T_{\text{warmup}}. 
\end{align}
After this warm-up stage, the learning rate will be set by classical learning rate schedulers, such as the linear decay, the inverse square-root decay, or forced decay at particular iterations. We conduct experiments to show that this learning rate warm-up stage is essential for training Post-LN Transformer models.
\begin{figure*}[htb]
\centering
\subfigure[Loss/BLEU on the IWSLT14 De-En task (Adam)]{
\label{fig:v1-adam}
\begin{minipage}[t]{0.49\linewidth}
\centering
\includegraphics[width=\linewidth]{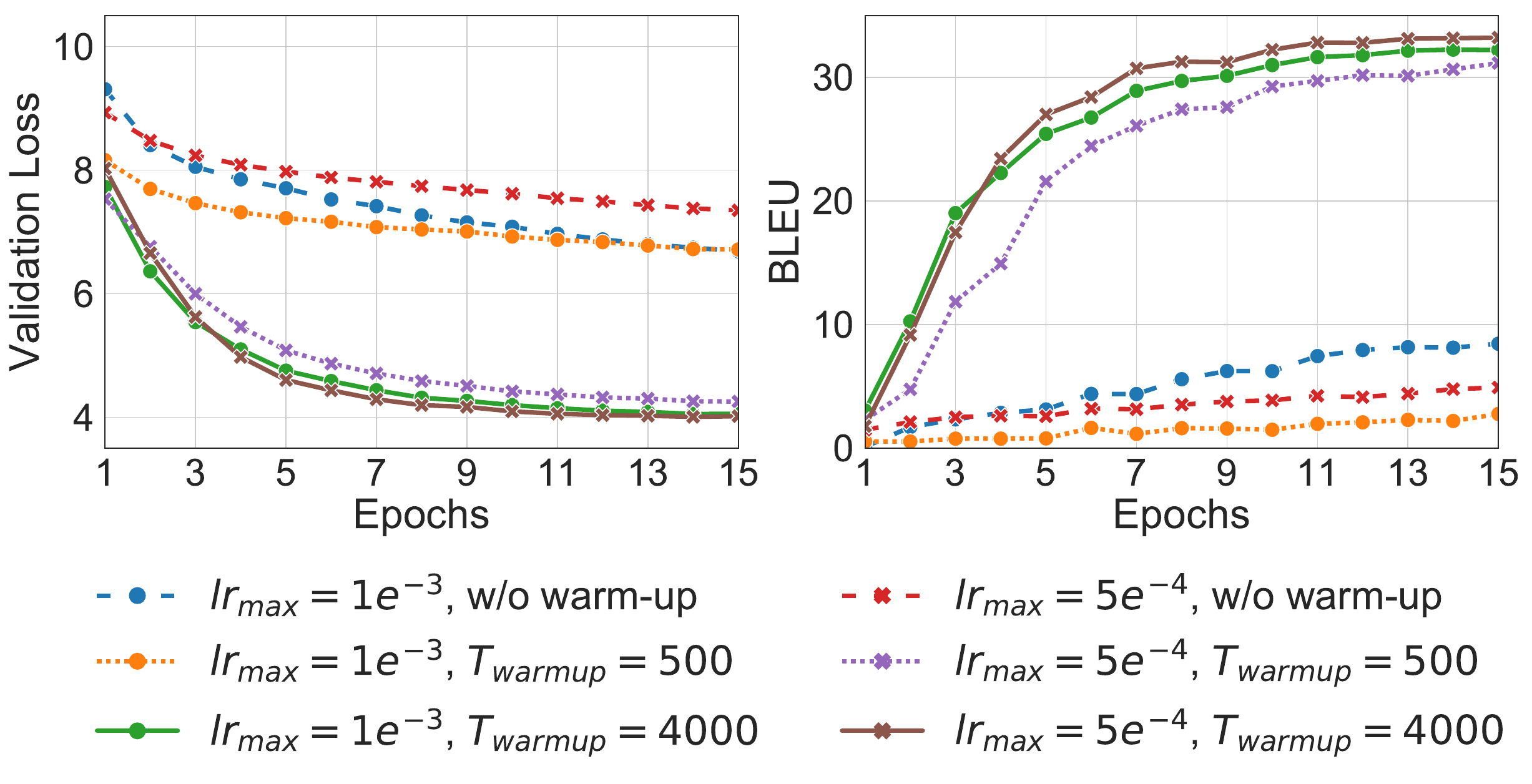}
\end{minipage}%
}
\subfigure[Loss/BLEU on the IWSLT14 De-En task (SGD)]{
\label{fig:v1-sgd}
\begin{minipage}[t]{0.49\linewidth}
\centering
\includegraphics[width=\linewidth]{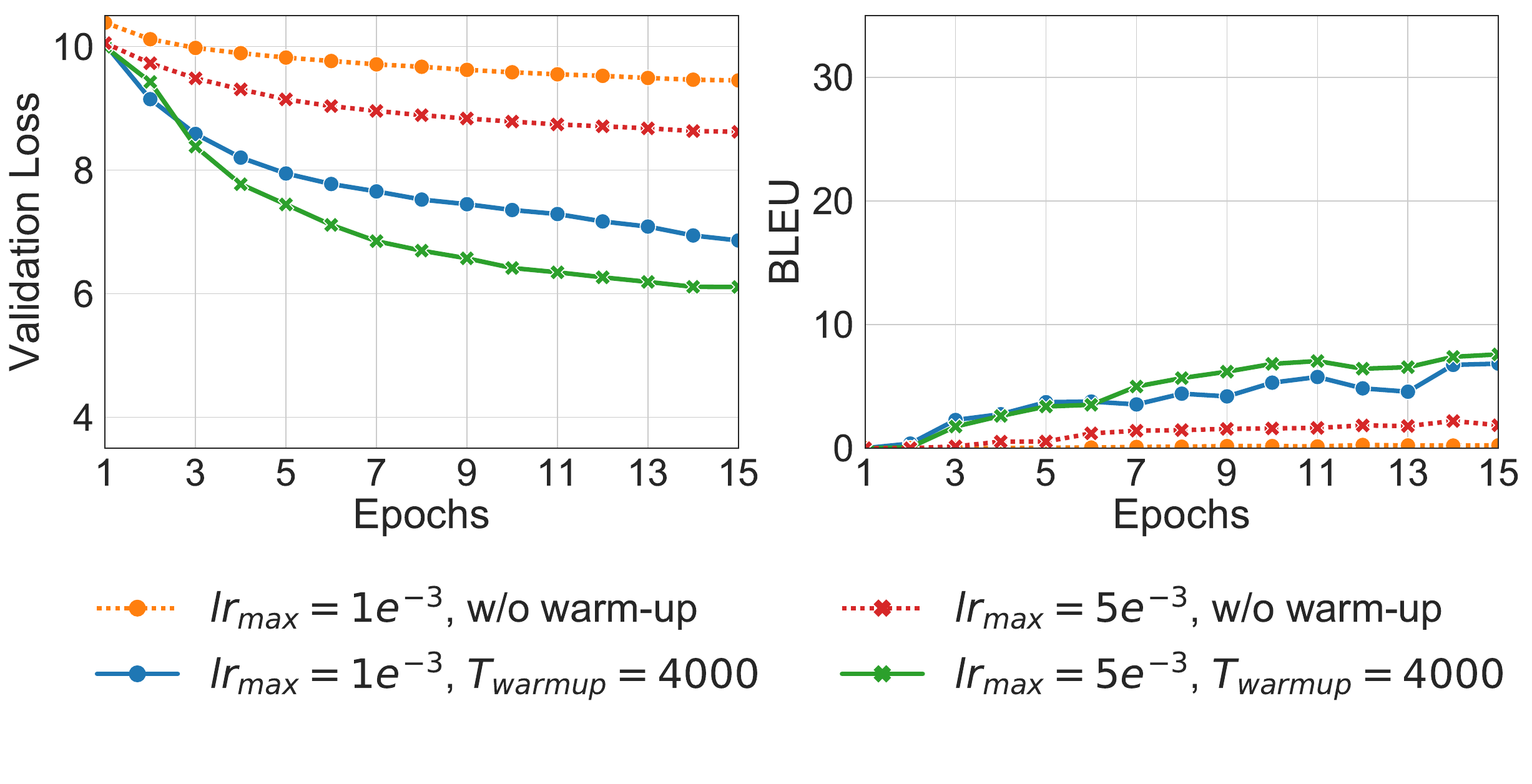}
\end{minipage}%
}
\centering
\caption{Performances of the models optimized by Adam and SGD on the IWSLT14 De-En task.}
\end{figure*}
\paragraph{Experimental setting} 
We conduct experiments on the IWSLT14 German-to-English (De-En) machine translation task. We mainly investigate two aspects: whether the learning rate warm-up stage is essential and whether the final model performance is sensitive to the value of $T_{\text{warmup}}$. To study the first aspect, we train the model with the Adam optimizer \citep{kingma2014adam} and the vanilla SGD optimizer \citep{ruder2016overview} respectively. For both optimziers, we check whether the warm-up stage can be removed. We follow \citet{vaswani2017attention} to set hyper-parameter $\beta$ to be $(0.9,0.98)$ in Adam. We also test different $\text{lr}_{max}$ for both optimizers. For Adam, we set $\text{lr}_{max}=5e^{-4}$ or $1e^{-3}$, and for SGD, we set $\text{lr}_{max}=5e^{-3}$ or $1e^{-3}$. When the warm-up stage is used, we set $T_{\text{warmup}}=4000$ as suggested by the original paper \citep{vaswani2017attention}. To study the second aspect, we set $T_{\text{warmup}}$ to be 1/500/4000 (``1'' refers to the no warm-up setting) and use $\text{lr}_{max}=5e^{-4}$ or $1e^{-3}$ with Adam. For all experiments, a same inverse square root learning rate scheduler is used after the warm-up stage. We use both validation loss and BLEU \citep{papineni2002bleu} as the evaluation measure of the model performance.

\paragraph{Results and discussions} We record the model checkpoints for every epoch during training and calculate the validation loss and BLEU score. The performance of the models are plotted in Figure \ref{fig:v1-adam} and Figure \ref{fig:v1-sgd}. The x-axis is the epoch number and the y-axis is the BLEU score/validation loss. "w/o warm-up" indicates ``without the warm-up stage'' while "w/ warm-up" indicates ``with the warm-up stage''.

First, we can see that for both optimizers, the learning rate warm-up stage is essential. Without the warm-up stage, the BLEU score of the model trained with Adam optimizer can only achieve 8.45. As a comparison, the model trained using the warm-up stage can achieve around 34 in terms of BLEU score. The same trend can also be observed on the validation loss curves. Although the performance of the model trained with SGD is significantly worse than Adam, we can still see similar phenomena as Adam. The BLEU score is just above zero in 15 epochs without using the warm-up stage. 

Second, we can see that the optimization process is sensitive to the value of $T_{\text{warmup}}$, which means $T_{\text{warmup}}$ is an important hyper-parameter in training the Post-LN Transformer. For example, when setting $T_{\text{warmup}}=500$, the learned models with Adam achieve only 31.16 and 2.77 in term of BLEU score for $lr_{max}=5e^{-4}$ and $1e^{-3}$ respectively. 

Such a warm-up stage has several disadvantages. First, its configuration significantly affects the final performance. The practitioners need a careful hyper-parameter tuning, which is computationally expensive for large-scale NLP tasks. Second, the warm-up stage could slow down the optimization. Standard optimization algorithms usually start with a large learning rate for fast convergence. However, when using the warm-up stage, the learning rate has to gradually increase from zero, which may make the training inefficient. 
\citet{liu2019variance} suggests that the warm-up stage plays a role in reducing the undesirably significant variance in Adam in the early stage of model training. However, according to our results, the warm-up stage also helps the training of SGD. This suggests that the benefit of the warm-up stage may be not for a particular optimizer.

\subsection{Understanding the Transformer at initialization}

We can see that the Post-LN Transformer cannot be trained with a large learning rate from scratch. This motivates us to investigate what happens at the model initialization. We first introduce the parameter initialization setting for our theoretical analysis and then present our theoretical findings.

\paragraph{Notations}
We denote $\mathcal{L}(\cdot)$ as the loss function of one position, $\tilde{\mathcal{L}}(\cdot)$ as the loss function of the whole sequence, $\|\cdot\|_2$ and $\|\cdot\|_F$ as the $l_2$ norm (spectral norm) and the Frobenius norm, $\text{LN}(x)$ as the standard layer normalization with scale $\gamma=1$ and bias $\beta=0$, and $\mathbf{J}_{LN}(x)=\frac{\partial \text{LN}(x)}{\partial x}$ as the Jacobian matrix of $\text{LN}(x)$. Let $\mathcal{O}(\cdot)$ denote standard Big-O notation that suppress multiplicative
constants.

\paragraph{Parameter Initialization}
The parameter matrices in each Transformer layer are usually initialized by the Xavier initialization \citep{glorot2010understanding}. Given a matrix of size $n_{in}\times n_{out}$, the Xavier initialization sets the value of each element by independently sampling from Gaussian distribution $N(0, \frac{2}{n_{in}+n_{out}})$. The bias vectors are usually initialized as zero vectors. The scale $\gamma$ in the layer normalization is set to one. 

For theoretical analysis, we study a simpler setting. First, we focus on single-head attention instead of the multi-head variant and for all layers, we set the shape of $W^{Q,l}$, $W^{K,l}$, $W^{V,l}$, $W^{1,l}$,$W^{2,l}$ to be $d\times d$. Second, we initialize the parameter matrices in the self-attention sub-layer $W^{Q,l}$ and $W^{K,l}$ to be zero matrices. In this setting, the attention is a uniform distribution at initialization and $\text{MultiHeadAtt}(x_{l, i}^1,[ x_{l, 1}^1, x_{l, 2}^1,\cdots, x_{l, n}^1])$ can be simplified as $\frac{1}{n}\sum_{j=1}^{n}x_{l, j} W^{V, l}$. Third, we assume the input vectors are also sampled from the same Gaussian distribution. This is reasonable since the inputs are linear combinations of word embeddings and learnable positional embeddings, both of which are initialized by Gaussian distributions. 

\paragraph{Post-LN Transformer v.s. Pre-LN Transformer}
We compare the Post-LN Transformer with another variant of the Transformer architecture, the Transformer with Pre-Layer Normalization (Pre-LN). The Pre-LN Transformer was implemented in several systems \citep{tensor2tensor, klein2018opennmt,liu2019roberta}. \citet{wang2019learning} suggested that the Pre-LN Transformer outperforms the Post-LN Transformer when the number of layers increases. Different from the Post-LN Transformer that puts the layer normalization between the residual blocks, the Pre-LN Transformer puts the layer normalization inside the residual connection and places it before all other non-linear transformations. Additionally, the Pre-LN Transformer uses a \emph{final layer normalization} right before the prediction. We provide the mathematical formulations and visualizations of the Post-LN/Pre-LN Transformer in Table \ref{tbl:translation} and Figure \ref{fig:sketch}. 

For both architectures, each $x_{L,i}$  passes through a softmax layer to produce a distribution over the dictionary $V$. The loss function is defined on the softmax distribution. For example, in sequence prediction, the loss function is defined as $\mathcal{L}(x_{L+1,i}^{post})=-\log(\text{softmax}_{y_i}(W^{emb}x_{L+1,i}^{post}))$ for the Post-LN Transformer and $\mathcal{L}(x_{Final,i}^{pre})=-\log(\text{softmax}_{y_i}(W^{emb}x_{Final,i}^{pre}))$ for the Pre-LN Transformer, where $\text{softmax}_{y_i}$ is the probability of ground truth token $y_i$ outputted by the softmax distribution and $W^{emb}$ is the word embedding matrix. The loss of the whole sequence is an average of the loss on each position. Without loss of generality, we assume that all the derivatives are bounded. We introduce the following concentration property of random variables which will be further used in the theorem.

\begin{definition}
A random variable $Z\geq 0$ is called $(\epsilon,\delta)$-bounded if with probability at least $1-\delta$, $\frac{Z-\mathbb{E}Z}{\mathbb{E}Z}\leq \epsilon$, where $\epsilon>0$ and $0<\delta<1$. 
\end{definition}

Intuitively, if the random variable $Z$ is $(\epsilon,\delta)$-bounded, then with a high probability its realization will not get too far away from its expectation. For example, if $Y$ is a $d$-dimensional standard Gaussian random vector, then $Z=\|Y\|_2^2$ is $(\epsilon,\delta)$-bounded with $\delta=\exp(-d\epsilon^2/8)$, $0<\epsilon<1$ (see 
 supplementary material for details). As parameter matrices in self-attention sub-layers and FFN sub-layers are initialized by Gaussian distributions, if the norm of the hidden states in the Transformer satisfies the concentrated condition above, we have the following theorem to characterize the scale of the gradients.

\begin{theorem} [Gradients of the last layer in the Transformer]
Assume that $\|x^{post,5}_{L,i}\|_2^2$ and $\|x^{pre}_{L+1,i}\|_2^2$ are $(\epsilon,\delta)$-bounded for all $i$, where $\epsilon$ and $\delta=\delta(\epsilon)$ are small numbers.  Then with probability at least $0.99-\delta-\frac{\epsilon}{0.9+\epsilon}$, for the Post-LN Transformer with $L$ layers, the gradient of the parameters of the last layer satisfies $$\|\frac{\partial \tilde{\mathcal{L}}}{\partial W^{2,L}}\|_F\leq \mathcal{O}(d\sqrt{\ln{d}})$$ and for the Pre-LN Transformer with $L$ layers, $$\|\frac{\partial \tilde{\mathcal{L}}}{\partial W^{2,L}}\|_F\leq \mathcal{O}\left(d\sqrt{\frac{\ln{d}}{L}}\right).$$
\end{theorem}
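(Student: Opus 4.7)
The plan is to apply the chain rule at the last layer, express the gradient with respect to $W^{2,L}$ as an outer product, and then separately bound the magnitudes of the two factors, using the key fact that the Jacobian of layer normalization at a vector $y\in\mathbb{R}^d$ has spectral norm on the order of $\sqrt{d}/\|y\|_2$. Concretely, for a single position $i$, in the Post-LN case
\begin{equation}
\frac{\partial \mathcal{L}(x^{post}_{L+1,i})}{\partial W^{2,L}} = \bigl[\text{ReLU}(x^{post,3}_{L,i}W^{1,L}+b^{1,L})\bigr]^{\!\top}\,\frac{\partial \mathcal{L}}{\partial x^{post,4}_{L,i}},
\end{equation}
where $\frac{\partial \mathcal{L}}{\partial x^{post,4}_{L,i}} = \frac{\partial \mathcal{L}}{\partial x^{post}_{L+1,i}}\,\mathbf{J}_{LN}(x^{post,5}_{L,i})$, and an entirely analogous decomposition in the Pre-LN case through $\mathbf{J}_{LN}(x^{pre}_{L+1,i})$. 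Since the Frobenius norm of an outer product equals the product of the two $\ell_2$ norms, it suffices to bound each factor.

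Next I would bound the three pieces. First, the ``loss-side'' vector $\partial \mathcal{L}/\partial x^{post}_{L+1,i}$ at initialization is $(W^{emb})^{\!\top}(p-e_{y_i})$, whose $\ell_2$ norm is $\mathcal{O}(\sqrt{d})$ by Xavier initialization of $W^{emb}$ (with a $\sqrt{\ln d}$ sub-Gaussian tail factor absorbed into the big-$\mathcal{O}$). Second, the activation $\text{ReLU}(x^{post,3}_{L,i}W^{1,L}+b^{1,L})$ has $\ell_2$ norm $\mathcal{O}(\sqrt{d\ln d})$: its input has norm $\sqrt{d}$ because it comes out of a layer normalization, multiplication by the Xavier matrix $W^{1,L}$ preserves the $\sqrt{d}$ scale in expectation, and ReLU can only shrink each coordinate; a union bound over the $d$ sub-Gaussian coordinates gives the $\sqrt{\ln d}$ factor. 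Third, and most crucially, the Jacobian $\mathbf{J}_{LN}(y)=\frac{1}{\sigma}\bigl(I-\frac{1}{d}\mathbf{1}\mathbf{1}^{\!\top}-\frac{1}{d}\hat y\hat y^{\!\top}\bigr)$ with $\sigma=\|y\|_2/\sqrt{d}$ satisfies $\|\mathbf{J}_{LN}(y)\|_2 = \mathcal{O}(\sqrt{d}/\|y\|_2)$, so the whole chain factor $\partial\mathcal{L}/\partial x^{post/pre,5}$ scales like $\sqrt{d}\cdot\sqrt{d}/\|x^{post,5}_{L,i}\|_2$ in Post-LN and $\sqrt{d}\cdot\sqrt{d}/\|x^{pre}_{L+1,i}\|_2$ in Pre-LN.

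All that now remains is to plug in the expected scales of the two hidden norms, which is where the $(\epsilon,\delta)$-bounded hypothesis is used to replace a random quantity by its expectation up to a constant. For Post-LN, $x^{post,5}_{L,i}=x^{post,3}_{L,i}+x^{post,4}_{L,i}$: the first summand is normalized, the second is one Xavier-initialized FFN output, so $\mathbb{E}\|x^{post,5}_{L,i}\|_2^2=\Theta(d)$ and the LN Jacobian is $\Theta(1)$; combining the three bounds yields $\mathcal{O}(d\sqrt{\ln d})$. For Pre-LN, unrolling the recursion $x^{pre}_{l+1,i}=x^{pre}_{l,i}+x^{pre,5}_{l,i}$ shows $x^{pre}_{L+1,i}$ equals the embedding plus $L$ residual branches; at initialization these branches are approximately uncorrelated and each contributes $\Theta(d)$ to the expected squared norm, giving $\mathbb{E}\|x^{pre}_{L+1,i}\|_2^2=\Theta(Ld)$. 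The LN Jacobian thus shrinks by an extra $1/\sqrt{L}$ factor, producing the advertised $\mathcal{O}(d\sqrt{\ln d/L})$ bound, and a union bound over the concentration events and the softmax event delivers the probability $0.99-\delta-\epsilon/(0.9+\epsilon)$.

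The main technical obstacle is the Pre-LN norm estimate $\mathbb{E}\|x^{pre}_{L+1,i}\|_2^2=\Theta(Ld)$: one has to show that the residual contributions at different depths do not systematically cancel or reinforce, which at initialization requires tracking that each $x^{pre,5}_{l,i}$ has mean zero conditioned on its LayerNorm-normalized input and variance $\Theta(d)$ coming from the Xavier scaling of $W^{Q,l},W^{K,l},W^{V,l},W^{1,l},W^{2,l}$. A secondary nuisance is that $\tilde{\mathcal{L}}$ averages the per-position losses over $i=1,\dots,n$, so the outer-product decomposition produces a sum of $n$ rank-one matrices whose Frobenius norm must be bounded by the triangle inequality and then divided by $n$; this introduces only constants and is handled uniformly for both architectures.
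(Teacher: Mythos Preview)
Your approach is essentially the paper's: chain rule through the final layer normalization, the Jacobian bound $\|\mathbf{J}_{LN}(y)\|_2=\mathcal{O}(\sqrt d/\|y\|_2)$ (the paper's Lemma~3), the hidden-state norm estimates $\mathbb{E}\|x^{post,5}_{L,i}\|_2^2=\Theta(d)$ and $\mathbb{E}\|x^{pre}_{L+1,i}\|_2^2=\Theta(Ld)$ (the paper's Lemma~2, proved exactly by the recursion argument you outline), and a union bound to collect the probability. Two packaging differences are worth noting.

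First, the paper does not attempt your explicit $\mathcal{O}(\sqrt d)$ estimate for the loss-side vector; it simply invokes the standing assumption ``all derivatives are bounded'' to set $\|\partial\mathcal{L}/\partial x_{L+1,i}\|_2=\mathcal{O}(1)$, and then bounds each \emph{entry} $\partial\tilde{\mathcal{L}}/\partial W^{2,L}_{pq}$ separately before summing over the $d^2$ entries. That entrywise bound is loose by a factor $\sqrt d$ relative to your outer-product identity $\|uv^\top\|_F=\|u\|_2\|v\|_2$, and this slack exactly compensates for the extra $\sqrt d$ you pick up on the loss side, so both routes land on $\mathcal{O}(d\sqrt{\ln d})$. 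Your rank-one packaging is cleaner; the paper's entrywise route avoids having to say anything about $W^{emb}$.

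Second, and this is the one step your sketch underplays: the $(\epsilon,\delta)$-bounded hypothesis is \emph{one-sided} (it only gives $Z\le(1+\epsilon)\mathbb{E}Z$ with probability $1-\delta$), whereas bounding $\|\mathbf{J}_{LN}(y)\|_2$ requires a \emph{lower} bound on $\|y\|_2^2$. The paper converts the upper tail into a lower tail via a reverse Markov inequality (its Lemma~4.2): conditional on $Z\le(1+\epsilon)\mathbb{E}Z$, one gets $\Pr[Z\le\alpha_0\mathbb{E}Z]\le\epsilon/(1+\epsilon-\alpha_0)$, and the choice $\alpha_0=1/10$ produces the $\epsilon/(0.9+\epsilon)$ term in the stated probability. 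Your phrase ``replace a random quantity by its expectation up to a constant'' should be sharpened to this two-step argument; the $0.99$ comes from the Chernoff union bound over the $d$ ReLU coordinates (not from any softmax event).
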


From Theorem 1, we can see that for the Post-LN Transformer, the scale of the gradients to the last FFN layer is of order $\mathcal{O}(d\sqrt{\ln{d}})$ which is independent of $L$. For the Pre-LN Transformer, the scale of the gradients is much smaller. 
We first study the forward propagation of the Post-LN Transformer and the Pre-LN Transformer. Lemma 1 will be served as a basic tool to prove the main theorem and other lemmas.

\begin{figure*}[htbp]
\label{fig:decoder-expectation}
\centering
\subfigure[$W^{1}$ in the FFN sub-layers]{
\label{fig:decoder-expectation1}
\begin{minipage}[t]{0.24\linewidth}
\centering
\includegraphics[width=\linewidth]{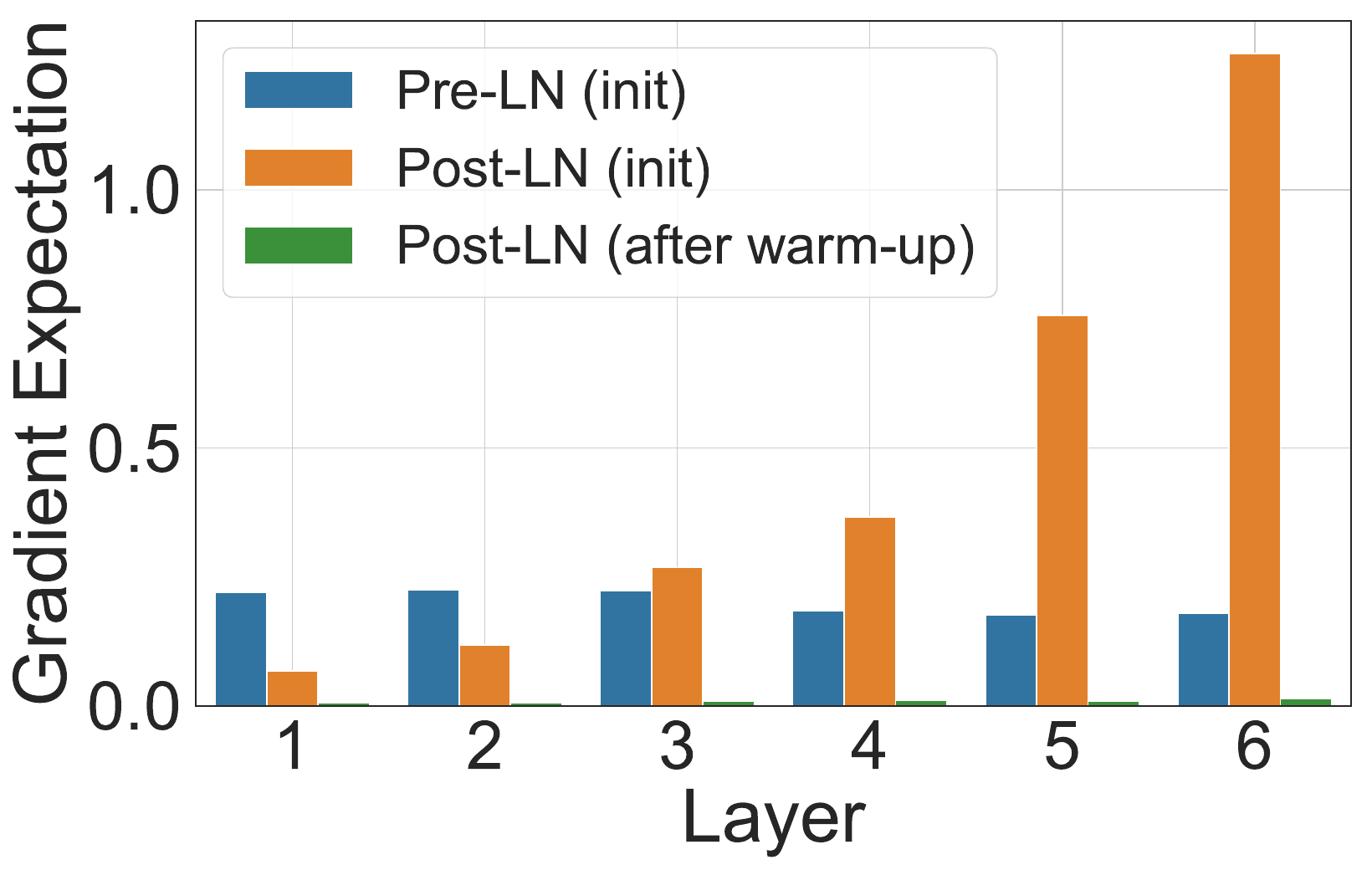}
\end{minipage}%
}%
\subfigure[$W^{2}$ in the FFN sub-layers]{
\label{fig:decoder-expectation2}
\begin{minipage}[t]{0.24\linewidth}
\centering
\includegraphics[width=\linewidth]{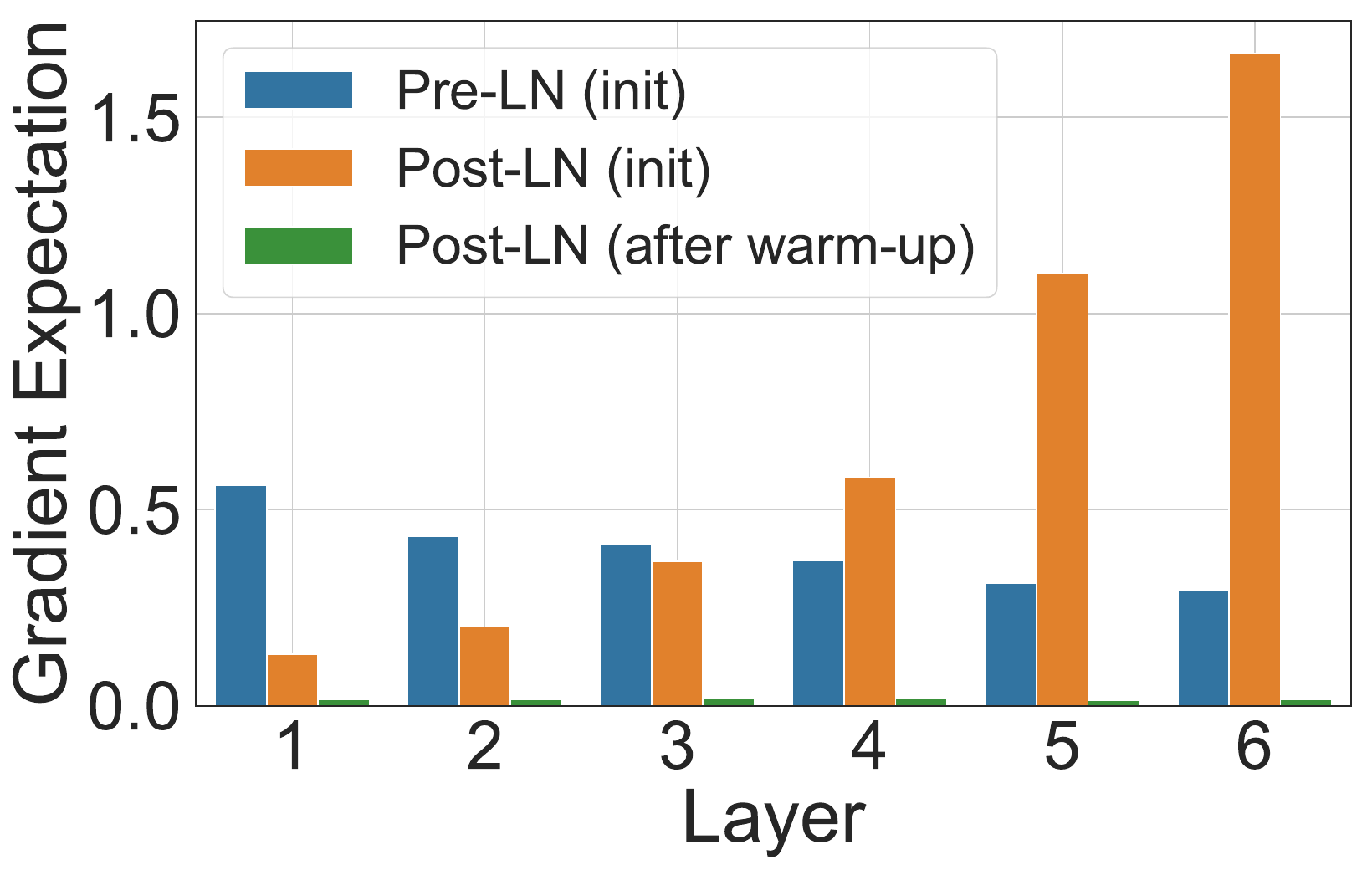}
\end{minipage}%
}%
\centering
\subfigure[Pre-LN Transformer]{
\label{fig:decoder-expectation-different-L-preln}
\begin{minipage}[t]{0.24\linewidth}
\centering
\includegraphics[width=\linewidth]{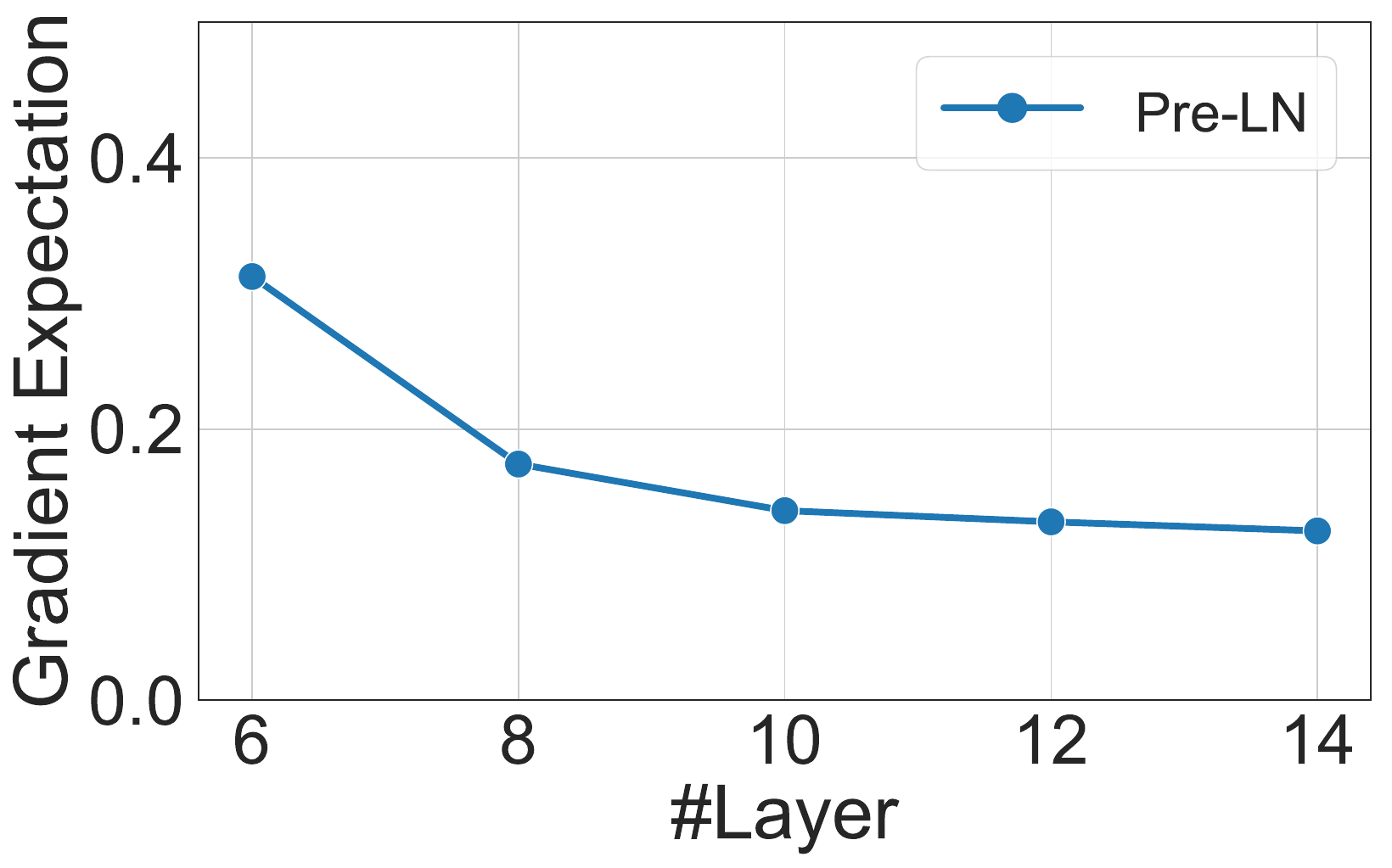}
\end{minipage}%
}%
\subfigure[Post-LN Transformer]{
\label{fig:decoder-expectation-different-L-postln}
\begin{minipage}[t]{0.24\linewidth}
\centering
\includegraphics[width=\linewidth]{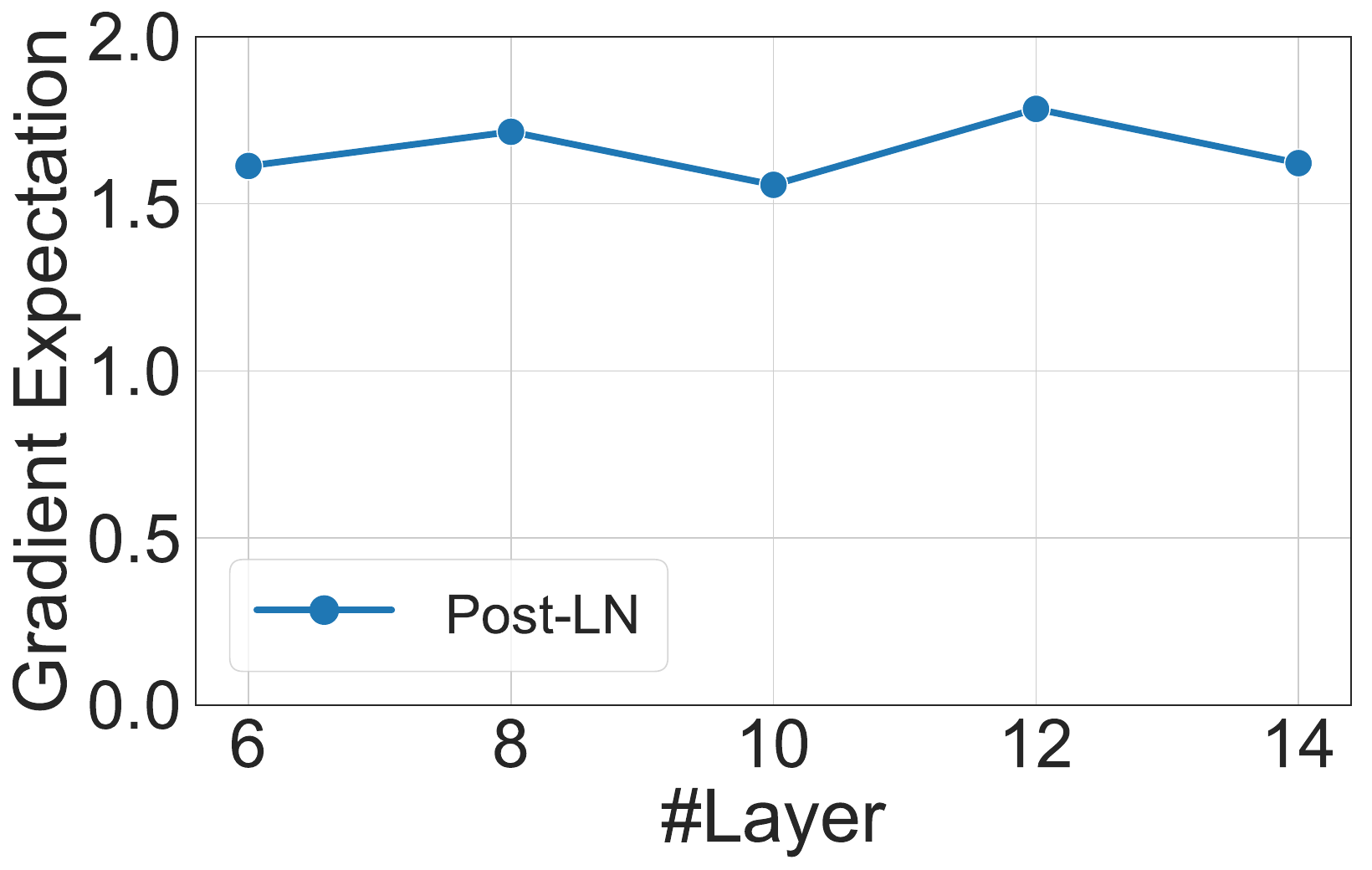}
\end{minipage}%
}%
\centering
\caption{The norm of gradients of 1. different layers in the 6-6 Transformer (a,b). 2. $W^{2,L}$ in different size of the Transformer (c,d).}
\end{figure*}

\begin{lemma}\label{l1}
If $X\in \mathbb{R}^d$ is a Gaussian vector, $X\sim N(0,\sigma^2 \mathbf{I}_d)$, then $\mathbb{E}(\|\text{ReLU}(X)\|_2^2)=\frac{1}{2}\sigma^2 d$.
\end{lemma}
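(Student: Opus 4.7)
The plan is to reduce the $d$-dimensional claim to a one-dimensional computation by exploiting the fact that the coordinates of $X$ are independent and identically distributed. Writing $\|\text{ReLU}(X)\|_2^2 = \sum_{k=1}^{d} \text{ReLU}(X_k)^2$ and applying linearity of expectation, the goal collapses to showing $\mathbb{E}[\text{ReLU}(X_1)^2] = \tfrac{1}{2}\sigma^2$, where $X_1 \sim N(0, \sigma^2)$.

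For the one-dimensional step, I would use the symmetry of the centered Gaussian distribution. Since $\text{ReLU}(X_1)^2 = X_1^2 \cdot \mathbf{1}\{X_1 > 0\}$ and the distribution of $X_1$ is symmetric about zero, the random variable $X_1^2 \cdot \mathbf{1}\{X_1 > 0\}$ has the same distribution as $X_1^2 \cdot \mathbf{1}\{X_1 < 0\}$. Adding these two identically distributed nonnegative random variables recovers $X_1^2$ almost surely (the event $\{X_1 = 0\}$ has measure zero), so $2\,\mathbb{E}[\text{ReLU}(X_1)^2] = \mathbb{E}[X_1^2] = \sigma^2$, giving $\mathbb{E}[\text{ReLU}(X_1)^2] = \tfrac{1}{2}\sigma^2$.

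Combining the two steps yields $\mathbb{E}[\|\text{ReLU}(X)\|_2^2] = d \cdot \tfrac{1}{2}\sigma^2 = \tfrac{1}{2}\sigma^2 d$, as claimed. There is no substantive obstacle here: the only thing to be slightly careful about is invoking the symmetry argument cleanly rather than computing the integral $\int_0^\infty x^2 \tfrac{1}{\sqrt{2\pi}\sigma} e^{-x^2/(2\sigma^2)}\, dx$ by hand, which would work equally well but is less transparent. Either route delivers a short, self-contained proof that can be quoted directly in later applications where this expectation arises when analyzing the FFN sub-layer output.
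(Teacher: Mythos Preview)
Your proposal is correct and follows essentially the same route as the paper: both reduce to a single coordinate via linearity and the i.i.d.\ structure, then exploit the symmetry of the centered Gaussian to extract the factor $\tfrac{1}{2}$. The only cosmetic difference is that the paper phrases the symmetry step via conditioning on $\{X_1\ge 0\}$ and writes out the half-line integral, whereas you use the indicator decomposition $X_1^2 = X_1^2\mathbf{1}\{X_1>0\}+X_1^2\mathbf{1}\{X_1<0\}$ directly; the content is identical.
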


Based on Lemma 1, we have the following lemma to estimate the scale of the hidden states in different layers for the Post-LN Transformer and the Pre-LN Transformer.

\begin{lemma}
At initialization, for the Post-LN Transformer, $\mathbb{E}(\|x_{l,i}^{post,5}\|_2^2)=\frac{3}{2}d$ for all $l>0$ and $i$. For the Pre-LN Transformer, $(1+\frac{l}{2})d\leq\mathbb{E}(\|x^{pre}_{l,i}\|_2^2)\leq(1+\frac{3l}{2})d$ for all $l>0$ and $i$. Expectations are taken over the input and the randomness of initialization.
\end{lemma}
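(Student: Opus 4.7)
The proof has two pieces that share the same underlying computation: given a vector with known expected squared norm that is independent of the next layer's Xavier-initialized weights, propagate the squared-norm expectation through a linear map, a ReLU, another linear map, and a residual add. The key observations I would establish first are (i) LayerNorm (with $\gamma{=}1,\beta{=}0$) produces a vector whose squared norm equals $d$ deterministically, since each coordinate has empirical mean $0$ and empirical variance $1$; (ii) if $y$ is independent of a Xavier matrix $W \in \mathbb{R}^{d\times d}$ (entries $N(0,1/d)$) then $\mathbb{E}\|yW\|_2^2 = \mathbb{E}\|y\|_2^2$, by computing a coordinate of $yW$ as a conditionally Gaussian sum; (iii) conditional on $y$, $yW$ is a centered Gaussian with covariance $(\|y\|_2^2/d)\mathbf{I}_d$, so Lemma 1 gives $\mathbb{E}[\|\text{ReLU}(yW)\|_2^2 \mid y] = \tfrac{1}{2}\|y\|_2^2$, hence $\mathbb{E}\|\text{ReLU}(yW)\|_2^2 = \tfrac{1}{2}\mathbb{E}\|y\|_2^2$.

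\textbf{Post-LN.} I would apply these to $x^{post,5}_{l,i} = x^{post,3}_{l,i} + x^{post,4}_{l,i}$. From (i), $\|x^{post,3}_{l,i}\|_2^2 = d$ deterministically. Chaining (ii)--(iii)--(ii) through the FFN gives $\mathbb{E}\|x^{post,4}_{l,i}\|_2^2 = \tfrac{1}{2}d$. The cross term $\mathbb{E}\langle x^{post,3}_{l,i}, x^{post,4}_{l,i}\rangle$ vanishes because $x^{post,4}_{l,i}$ is linear in the mean-zero matrix $W^{2,l}$ that is independent of $x^{post,3}_{l,i}$ and of the ReLU activations. Summing yields $\mathbb{E}\|x^{post,5}_{l,i}\|_2^2 = \tfrac{3}{2}d$, with no dependence on $l$ because the next layer's input $x^{post}_{l+1,i} = \text{LayerNorm}(x^{post,5}_{l,i})$ resets the norm to $\sqrt{d}$.

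\textbf{Pre-LN.} Here I would unroll one layer's residual recursion,
\begin{equation*}
x^{pre}_{l+1,i} \;=\; x^{pre}_{l,i} \;+\; x^{pre,2}_{l,i} \;+\; x^{pre,5}_{l,i},
\end{equation*}
and proceed by induction on $l$, with base case $\mathbb{E}\|x^{pre}_{1,i}\|_2^2 \in [\tfrac{3}{2}d, \tfrac{5}{2}d]$ starting from $\mathbb{E}\|x^{pre}_{0,i}\|_2^2 = d$. All three pairwise cross terms have zero expectation, again because $x^{pre,2}_{l,i}$ and $x^{pre,5}_{l,i}$ each depend linearly on fresh, mean-zero weight matrices ($W^{V,l}$ and $W^{2,l}$ respectively) that are independent of the other summands. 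For the FFN branch, LayerNorm normalizes $x^{pre,3}_{l,i}$ so that $\|x^{pre,4}_{l,i}\|_2^2 = d$, and the Post-LN calculation above gives $\mathbb{E}\|x^{pre,5}_{l,i}\|_2^2 = \tfrac{1}{2}d$. For the attention branch, I would use the simplification that at initialization $W^{Q,l}{=}W^{K,l}{=}0$ makes attention uniform, so $x^{pre,2}_{l,i} = \bigl(\tfrac{1}{n}\sum_j x^{pre,1}_{l,j}\bigr) W^{V,l}$; by (ii) this gives $\mathbb{E}\|x^{pre,2}_{l,i}\|_2^2 = \|\tfrac{1}{n}\sum_j x^{pre,1}_{l,j}\|_2^2$, which lies in $[0,d]$ (nonnegativity below, and Cauchy--Schwarz together with $\|x^{pre,1}_{l,j}\|_2^2 = d$ above). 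Combining, the per-layer increment $\mathbb{E}\|x^{pre}_{l+1,i}\|_2^2 - \mathbb{E}\|x^{pre}_{l,i}\|_2^2$ lies in $[\tfrac{d}{2},\tfrac{3d}{2}]$, and the induction closes.

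\textbf{Main obstacle.} The one step that needs care, rather than routine bookkeeping, is the attention term: one must argue that the averaged contributions from different positions neither blow up (ruled out by Cauchy--Schwarz on the mean of the LayerNormed vectors) nor cancel in a way that would violate the lower bound (handled by the trivial nonnegativity of the squared norm of the average). Everything else is a sequence of applications of Lemma 1 and the independence/mean-zero structure of the Xavier initialization.
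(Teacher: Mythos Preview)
Your proposal is correct and follows essentially the same approach as the paper: both use that LayerNorm deterministically yields squared norm $d$, that cross terms vanish because the fresh Xavier matrices $W^{V,l}$ and $W^{2,l}$ are mean-zero and independent of the other factors, that Lemma~1 gives the $\tfrac{1}{2}$ factor through the ReLU, and that the attention-branch average is bounded via nonnegativity below and the triangle/Cauchy--Schwarz inequality above. The only cosmetic differences are that the paper routes the Pre-LN recursion through the intermediate quantity $\mathbb{E}\|x^{pre,3}_{l,i}\|_2^2$ rather than expanding all three summands at once, and it also records (unnecessary for the stated claim) bounds on $\mathbb{E}\|x^{post,2}_{l,i}\|_2^2$ before the second LayerNorm resets the norm.
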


Lemma 2 studies the expected norm of the hidden states in both Post-LN/Pre-LN Transformer. It is obviously that in the Post-LN Transformer, the norm of $x_{l,i}^{post}$ is $\sqrt{d}$ and thus we study the norm of $x^{post,5}_{l,i}$ instead. As we can see from Lemma 2, the scale of the hidden states in the Post-LN Transformer keeps to be the same in expectation while the scale of the hidden states in the Pre-LN Transformer grows linearly along with the depth. The next lemma shows that the scale of the hidden states highly relates to the scale of the gradient in the architectures using layer normalization.

\begin{lemma}\label{l2}
For $x\in \mathbb{R}^d$, we have $\|\mathbf{J}_{LN}(x)\|_2=\mathcal{O}(\frac{\sqrt{d}}{\|x\|_2})$
in which $\mathbf{J}_{LN}(x)=\frac{\partial\text{LN}(x)}{\partial x}$.
\end{lemma}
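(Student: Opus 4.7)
The plan is to compute the Jacobian $\mathbf{J}_{LN}(x)$ by direct differentiation, recognize it as a scaled orthogonal projection, and then read off the spectral norm from that structure.

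First, I would differentiate $\text{LN}(x)_i = (x_i-\mu)/\sigma$ where $\mu=\frac{1}{d}\sum_k x_k$ and $\sigma^2=\frac{1}{d}\sum_k (x_k-\mu)^2$. The one nonobvious sub-step is computing $\partial\sigma/\partial x_j$: using $\partial\sigma^2/\partial x_j=\frac{2}{d}\sum_k(x_k-\mu)(\delta_{kj}-1/d)$ and the identity $\sum_k(x_k-\mu)=0$, the second piece vanishes and one gets $\partial\sigma/\partial x_j=(x_j-\mu)/(\sigma d)$. Plugging this back in yields a clean entrywise formula, which I would assemble into the matrix form
\begin{equation*}
\mathbf{J}_{LN}(x) \;=\; \frac{1}{\sigma}\Bigl(I - \frac{1}{d}\mathbf{1}\mathbf{1}^T - \frac{1}{d}\tilde{x}\tilde{x}^T\Bigr),
\qquad \tilde{x} := \frac{x-\mu\mathbf{1}}{\sigma}.
\end{equation*}

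Next, I would exploit two elementary facts about $\tilde{x}$. By construction $\sum_i\tilde{x}_i=0$, so $\tilde{x}\perp\mathbf{1}$; and $\|\tilde{x}\|_2^2=\frac{1}{\sigma^2}\sum_i(x_i-\mu)^2=d$. Therefore $\frac{1}{d}\mathbf{1}\mathbf{1}^T$ and $\frac{1}{d}\tilde{x}\tilde{x}^T$ are two rank-one orthogonal projectors onto mutually orthogonal one-dimensional subspaces, and their sum is the orthogonal projector onto the 2-dimensional subspace $\mathrm{span}\{\mathbf{1},\tilde{x}\}$. Consequently the bracketed matrix is the orthogonal projector onto the complementary $(d-2)$-dimensional subspace, so its spectral norm is exactly $1$. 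This gives $\|\mathbf{J}_{LN}(x)\|_2 \le 1/\sigma$.

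Finally, since $\sigma=\|x-\mu\mathbf{1}\|_2/\sqrt{d}$, we obtain $\|\mathbf{J}_{LN}(x)\|_2 \le \sqrt{d}/\|x-\mu\mathbf{1}\|_2$. To reach the stated form $\mathcal{O}(\sqrt{d}/\|x\|_2)$ one notes that in the regime relevant to this paper (inputs drawn from mean-zero Gaussians, and hidden states that pass through layer normalizations or residual sums of such quantities), the empirical mean $\mu$ is negligible compared to the coordinate scale, so $\|x-\mu\mathbf{1}\|_2 = \Theta(\|x\|_2)$ with high probability; substituting this in yields the claim. The main obstacle is essentially bookkeeping — getting the Jacobian into the projection form without sign or index errors — together with this last step of justifying that centering does not shrink $\|x\|_2$ by more than a constant factor under the paper's initialization assumptions.
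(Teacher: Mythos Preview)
Your approach is correct and closely parallels the paper's proof, with a small but pleasant difference in packaging. The paper introduces the centered vector $y = x(I - \tfrac{1}{d}\mathbf{1}^\top\mathbf{1})$ and computes the Jacobian via the chain rule as the product $\tfrac{\sqrt{d}}{\|y\|_2}\bigl(I - \tfrac{y^\top y}{\|y\|_2^2}\bigr)\bigl(I - \tfrac{1}{d}\mathbf{1}^\top\mathbf{1}\bigr)$, then bounds each factor separately using an auxiliary lemma that $I - \alpha^\top\alpha$ has eigenvalues in $\{0,1\}$ for unit $\alpha$. You instead differentiate in one shot and recognize the result as a \emph{single} orthogonal projector onto a $(d-2)$-dimensional subspace (using $\tilde{x}\perp\mathbf{1}$ and $\|\tilde{x}\|_2^2=d$), which yields the spectral norm exactly rather than via submultiplicativity; the two matrix expressions coincide once expanded, since $\tilde{x}^\top\mathbf{1}=0$ kills the cross term. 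You are also more careful than the paper about the final step: the paper writes $\mathcal{O}(\sqrt{d}/\|y\|_2)=\mathcal{O}(\sqrt{d}/\|x\|_2)$ without comment, whereas you correctly flag that this is not a pointwise identity and needs $\|x-\mu\mathbf{1}\|_2=\Theta(\|x\|_2)$, which holds under the paper's Gaussian initialization assumptions.
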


The proof of Lemma 1, Lemma 2, Lemma 3, and Theorem 1 can be found in the supplementary material. The main idea is that the layer normalization will normalize the gradients. In the Post-LN Transformer, the scale of the inputs to the layer normalization is independent of $L$, and thus the gradients of parameters in the last layer are independent of $L$. While in the Pre-LN Transformer, the scale of the input to the final layer normalization is linear in $L$, and thus the gradients of all parameters will be normalized by $\sqrt{L}$.

\paragraph{Extended theory to other layers/parameters}
We have provided a formal proof on the gradients of the last FFN sub-layer as above. In order to fully understand the optimization, we also make some preliminary analysis for other layers and other parameters. Our main result is that the gradient norm in the Post-LN Transformer is large for the parameters near the output and will be likely to decay as the layer index $l$ decreases. On the contrary, the gradient norm in the Pre- Transformer will be likely to stay the same for any layer $l$. All the preliminary theoretical results are provided in the supplementary material.

\subsection{Empirical verification of the theory and discussion}
As our theory is derived based on several simplifications of the problem, we conduct experiments to study whether our theoretical insights are consistent with what we observe in real scenarios. The general model and training configuration exactly follow Section 3.2. The experiments are repeated ten times using different random seeds. 

\paragraph{On the concentration property} Given an initialized model, we record the hidden states in the Post-LN/Pre-LN Transformer across batches and find that the norm of the hidden states satisfies the property ((0.1,0.125)-bounded).

\paragraph{On Theorem 1} Theorem 1 suggests that for any sizes of the Post-LN Transformer, the scale of the gradient norm in the last FFN sub-layer remains the same. On the contrary, that of the Pre-LN Transformer decreases as the size of the model grows. We calculate and record the gradient norm in the last FFN sub-layer in 6-6/8-8/10-10/12-12/14-14 Post-LN/Pre-LN Transformer models at initialization. The results are plotted in Figure \ref{fig:decoder-expectation-different-L-preln} and \ref{fig:decoder-expectation-different-L-postln}. The x-axis is the size of the model, and the y-axis is the value of the gradient norm of $W^2$ in the final FFN sub-layer. The figures show when the number of layers grows, the gradient norm remains in the Post-LN Transformer (around 1.6) and decreases in the Pre-LN Transformer. This observation is consistent with our theory.

\paragraph{On the extended theory} %
We calculate the gradient norm of each paramter matrix in 6-6 Post-LN/Pre-LN Transformer. We record the gradient for each parameter for different mini-batches. For elements in a parameter matrix, we calculate their expected gradients and use the Frobenius norm of those values as the scale of the expected gradient of the matrix.  Figure \ref{fig:decoder-expectation1} and \ref{fig:decoder-expectation2} shows those statistics for FFN sub-layers. The x-axis indexes different Transformer layers. It can be seen from the figure, the scale of the expected gradients grows along with the layer index for the Post-LN Transformer. On the contrary, the scale almost keeps the same for different layers in the Pre-LN Transformer. These observations are consistent with our theoretical findings.

\paragraph{The critical warm-up stage for Post-LN Transformer
} Given the analysis above, we hypothesize that the gradient scale is one of the reasons that the Post-LN Transformer needs a careful learning rate scheduling. Since the gradients are large for some layers, using a large learning rate without warm-up may make the training unstable. 

To verify this argument, first, we study the gradient statistics for the Post-LN Transformer after the warm-up stage with Adam. It can be seen from Figure \ref{fig:decoder-expectation1} and \ref{fig:decoder-expectation2} that the scale of the gradients are very small, and the model can be trained with large learning rates. Second, we conduct an experiment to train the Post-LN Transformer from scratch using a fixed small learning rate, i.e., $1e^{-4}$,  to verify whether using small-step updates mitigates the issue. The details are provided in the supplementary material. In general, using a very small and fixed learning rate can mitigate the problem and optimize the Post-LN Transformer to a certain extent but the convergence is significantly slower. Both experiments above are supportive to our claim.

\section{Experiments}
We find in the previous section that the gradients at initialization for Pre-LN Transformer are well-behaved. Given this observation, we deduce that the learning rate warm-up stage can be safely removed when training Pre-LN Transformer. In this section, we empirically verify it on two main tasks in NLP, machine translation and unsupervised pre-training.

\subsection{Experiment Settings}
\paragraph{Machine Translation}
We conduct our experiments on two widely used tasks: the IWSLT14 German-to-English (De-En) task and the WMT14 English-to-German (En-De) task. 
For the IWSLT14 De-En task, we use the same model configuration as in Section 3. For the WMT14 En-De task, we use the Transformer \texttt{base} setting. More details can be found in the supplementary material.  

\begin{figure*}[htb]
\label{fig:v1v2-compare-machine-translation}
\centering
\subfigure[Validation Loss (IWSLT)]{
\label{fig:v1v2-iwslt-loss}
\begin{minipage}[t]{0.24\linewidth}
\centering
\includegraphics[width=\linewidth]{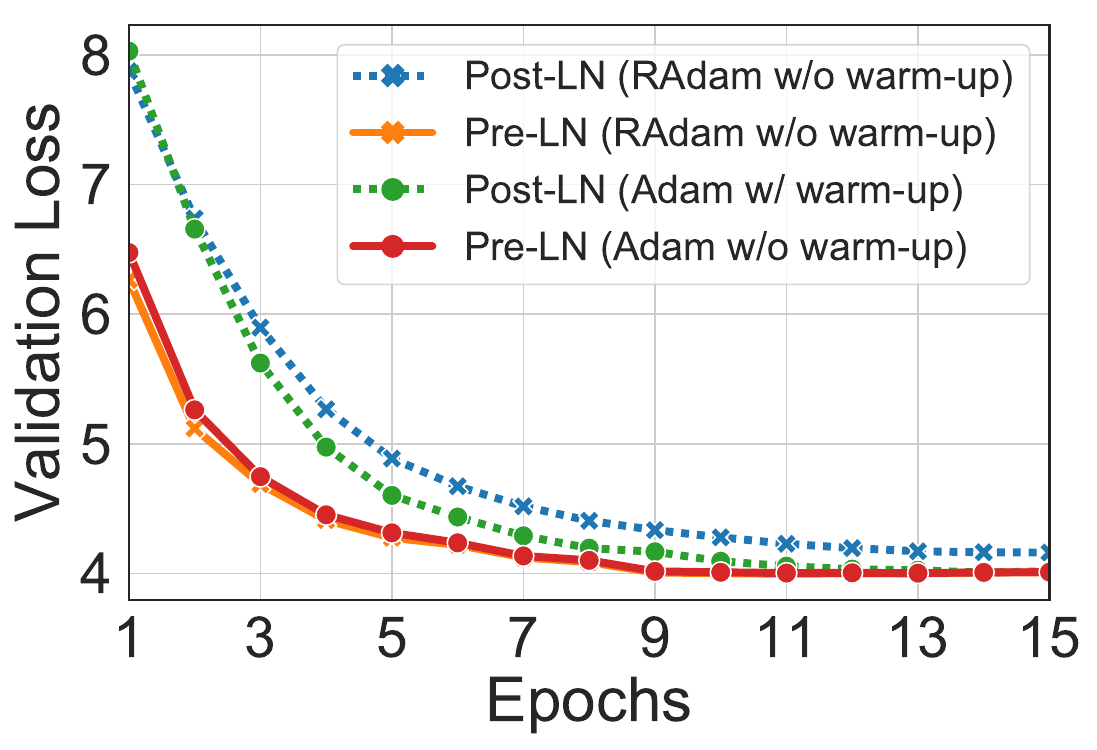}
\end{minipage}%
}%
\subfigure[BLEU (IWSLT)]{
\label{fig:v1v2-iwslt-bleu}
\begin{minipage}[t]{0.24\linewidth}
\centering
\includegraphics[width=\linewidth]{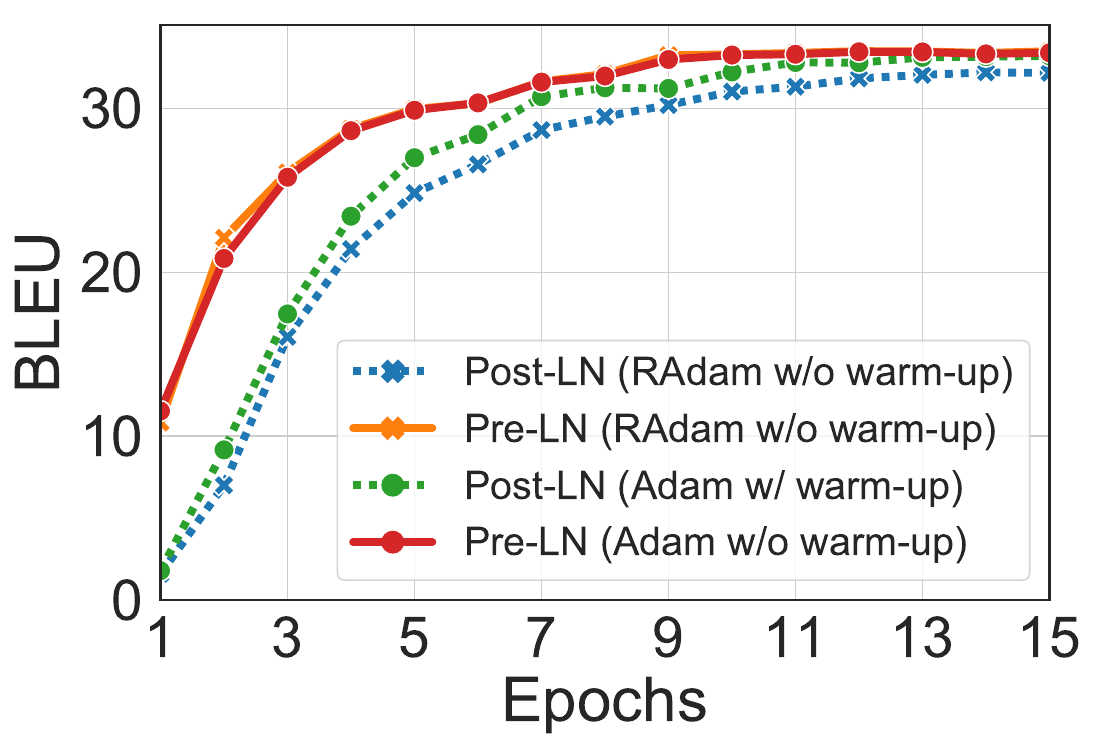}
\end{minipage}%
}
\subfigure[Validation Loss (WMT)]{
\label{fig:v1v2-wmt-loss}
\begin{minipage}[t]{0.24\linewidth}
\centering
\includegraphics[width=\linewidth]{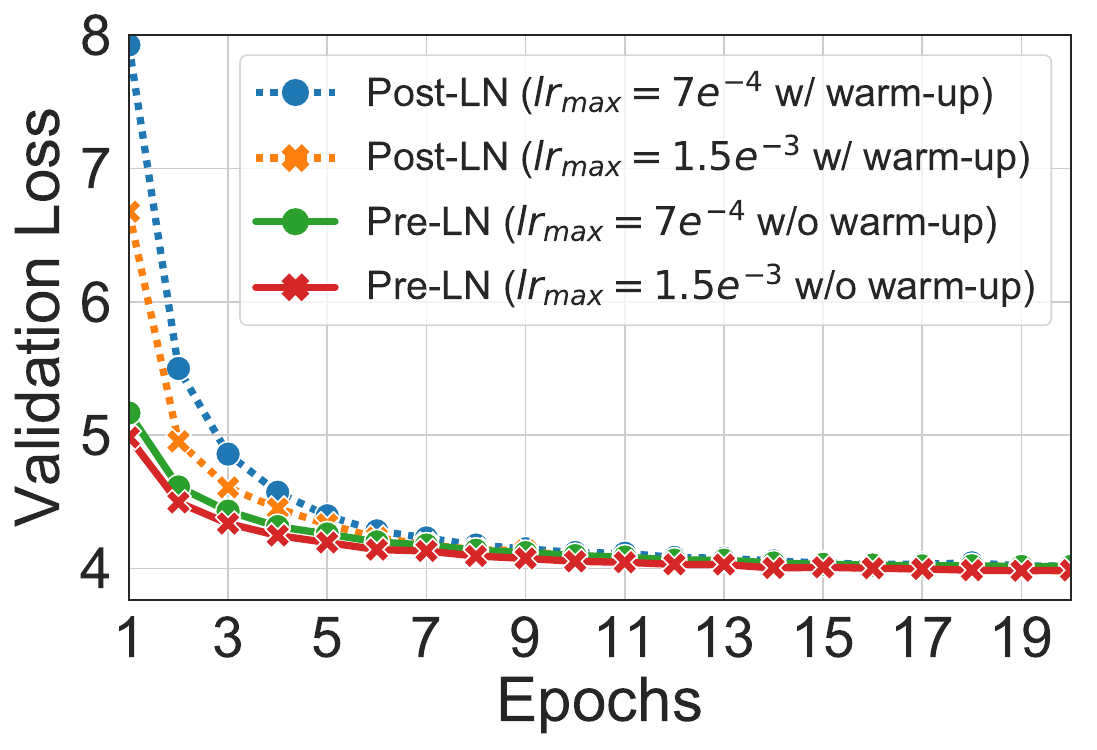}
\end{minipage}
}%
\subfigure[BLEU (WMT)]{
\label{fig:v1v2-wmt-bleu}
\begin{minipage}[t]{0.24\linewidth}
\centering
\includegraphics[width=\linewidth]{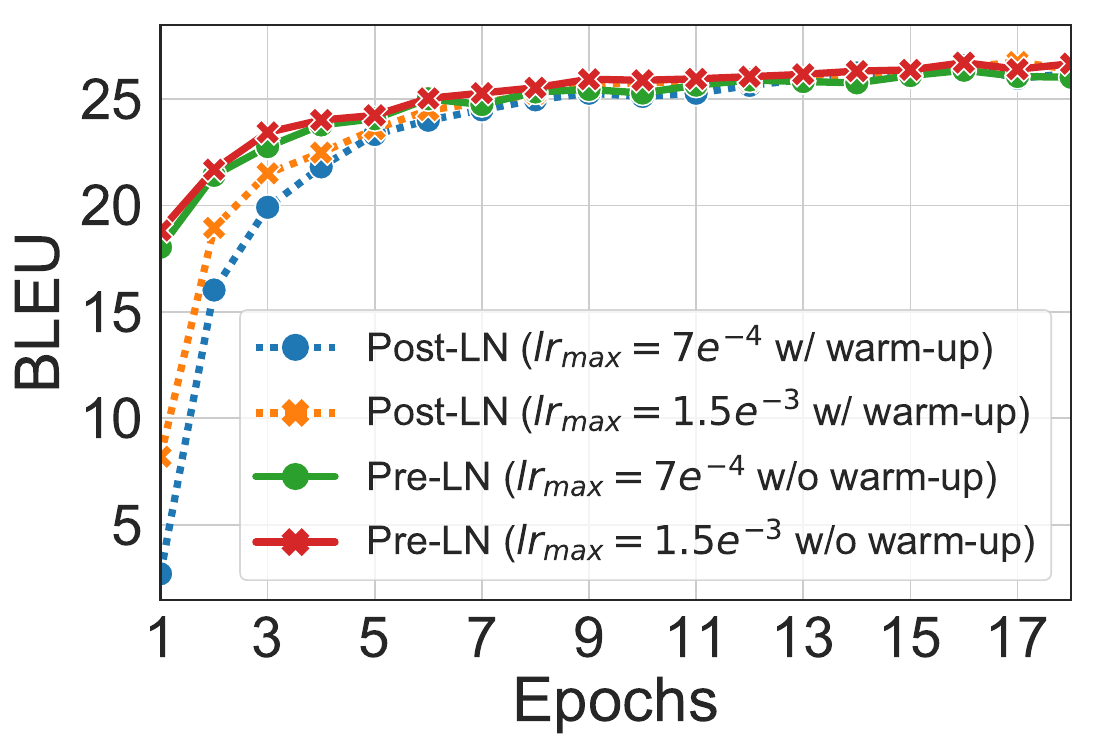}
\end{minipage}
}%
\centering
\caption{Performances of the models on the IWSLT14 De-En task and WMT14 En-De task}
\end{figure*}

For training the Pre-LN Transformer, we remove the learning rate warm-up stage. On the IWSLT14 De-En task, we set the initial learning rate to be $5e^{-4}$ and decay the learning rate at the 8-th epoch by 0.1. On the WMT14 En-De task, we run two experiments in which the initial learning rates are set to be $7e^{-4}/1.5e^{-3}$ respectively. Both learning rates are decayed at the 6-th epoch followed by the inverse square root learning rate scheduler.

We train the Post-LN Transformer using the learning rate warm-up stage as the baseline. In both IWSLT14 De-En task and WMT14 En-De task, we set the number of the warm-up stage to be 4000 following \citet{vaswani2017attention} and then use the inverse square root learning rate scheduler. For all experiments above, we use the Adam optimizer and set the hyper-parameter $\beta$ to be $(0.9,0.98)$. We set $lr_{max}$ as same as the initial learning rates of the Pre-LN Transformer in each corresponding experiment. Since \citet{liu2019variance} suggests that the learning rate warm-up stage can be removed using RAdam, we try this optimizer on the IWSLT14 De-En task. We use linear learning rate decay suggested by \citet{liu2019variance} and keep all other hyper-parameters to be the same as in other experiments.

\paragraph{Unsupervised Pre-training (BERT)}
We follow \citep{devlin2018bert} to use English Wikipedia corpus and BookCorpus for pre-training. As the dataset BookCorpus \citep{moviebook} is no longer freely distributed. We follow the suggestions from \citep{devlin2018bert} to crawl and collect BookCorpus on our own. The concatenation of two datasets contains roughly 3.4B words in total, which is comparable with the data corpus used in \citep{devlin2018bert}. We randomly split documents into one training set and one validation set. The training-validation ratio for pre-training is 199:1.

\begin{figure*}[htb]
\centering
\subfigure[Validation Loss on BERT]{
\label{fig:v1v2-bert}
\begin{minipage}[t]{0.25\linewidth}
\centering
\includegraphics[width=\linewidth]{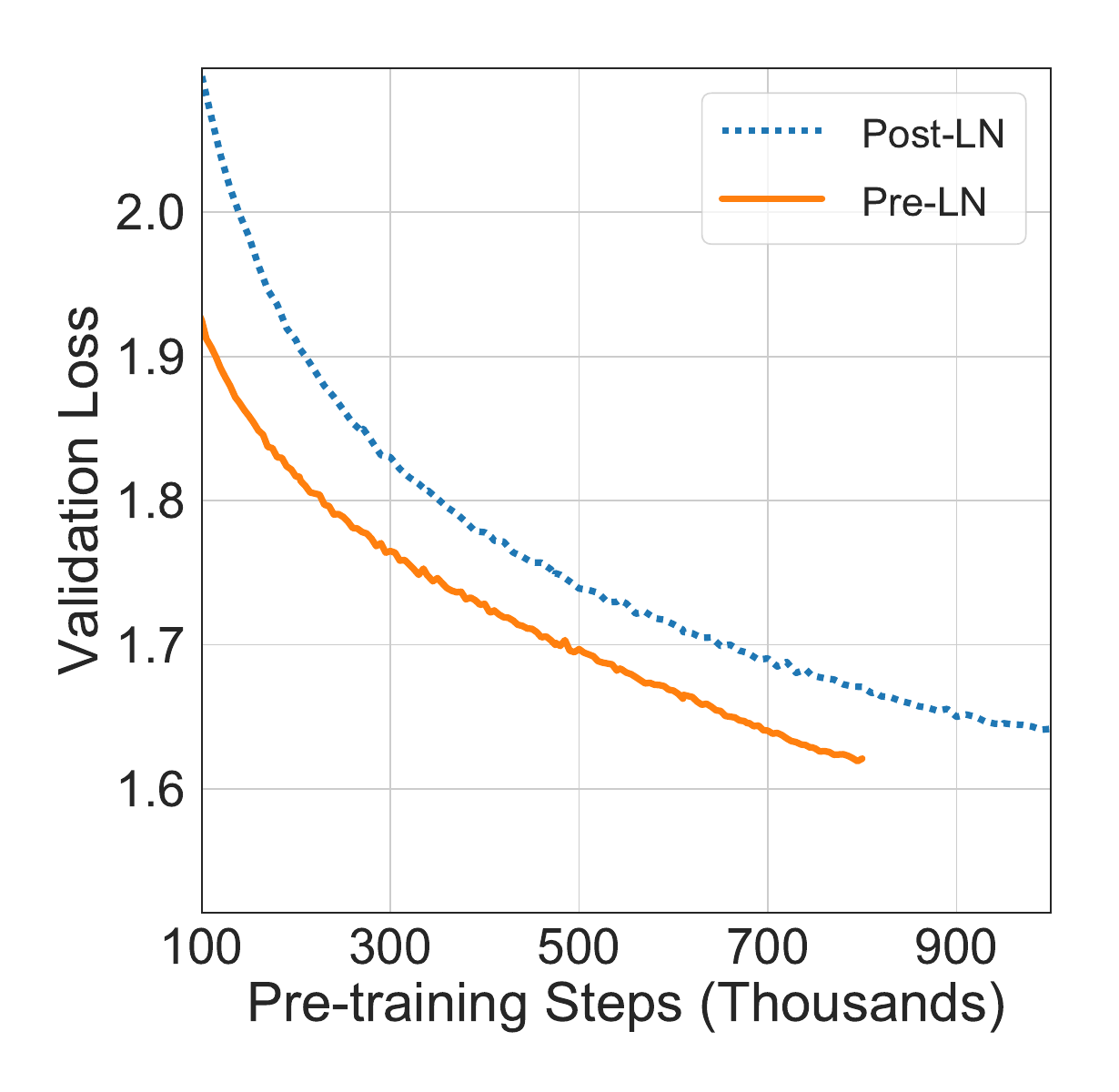}
\end{minipage}%
}%
\subfigure[Accuracy on MRPC]{
\label{fig:v1v2-glue1}
\begin{minipage}[t]{0.25\linewidth}
\centering
\includegraphics[width=\linewidth]{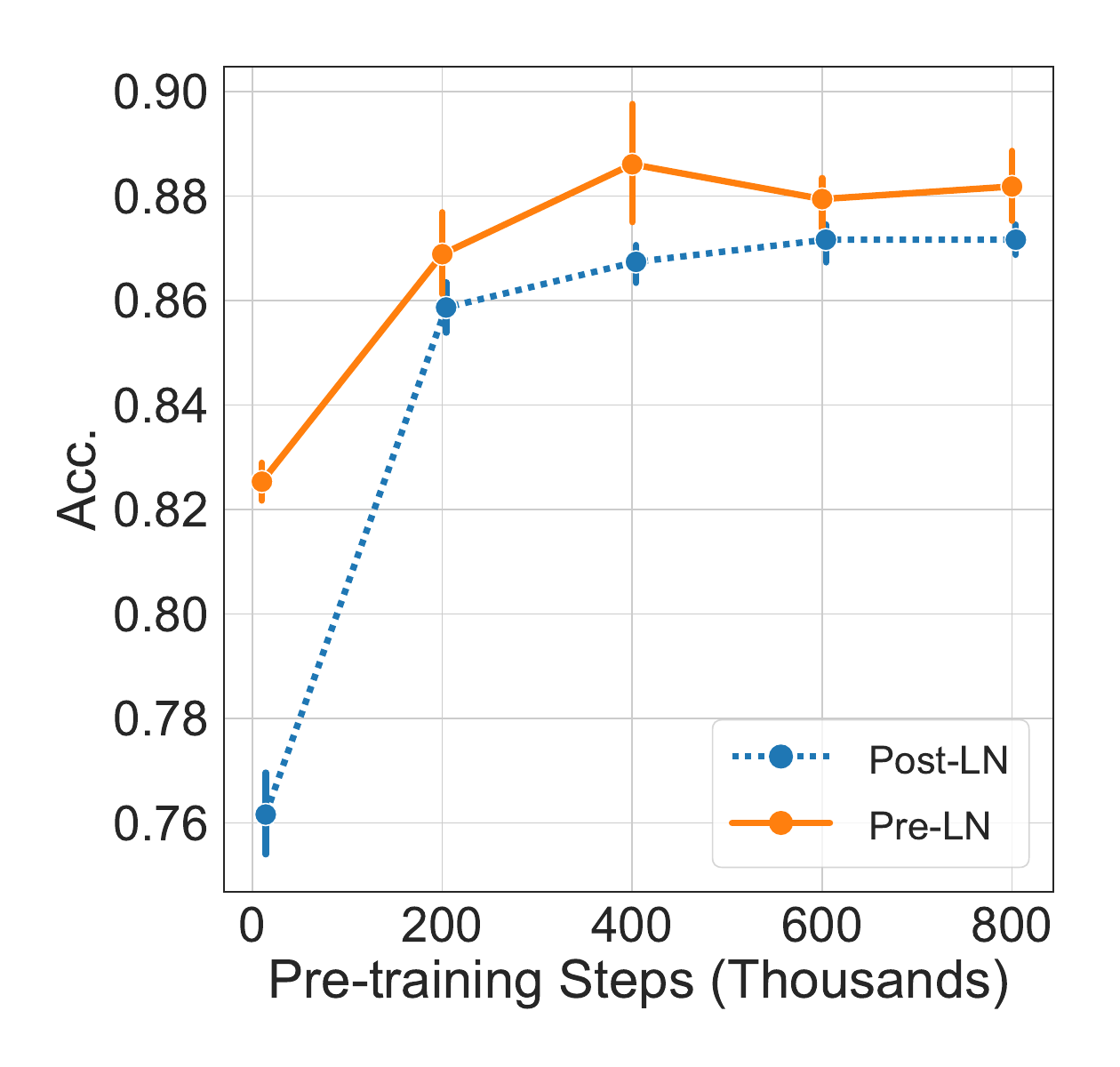}
\end{minipage}%
}%
\subfigure[Accuracy on RTE]{
\label{fig:v1v2-glue2}
\begin{minipage}[t]{0.25\linewidth}
\centering
\includegraphics[width=\linewidth]{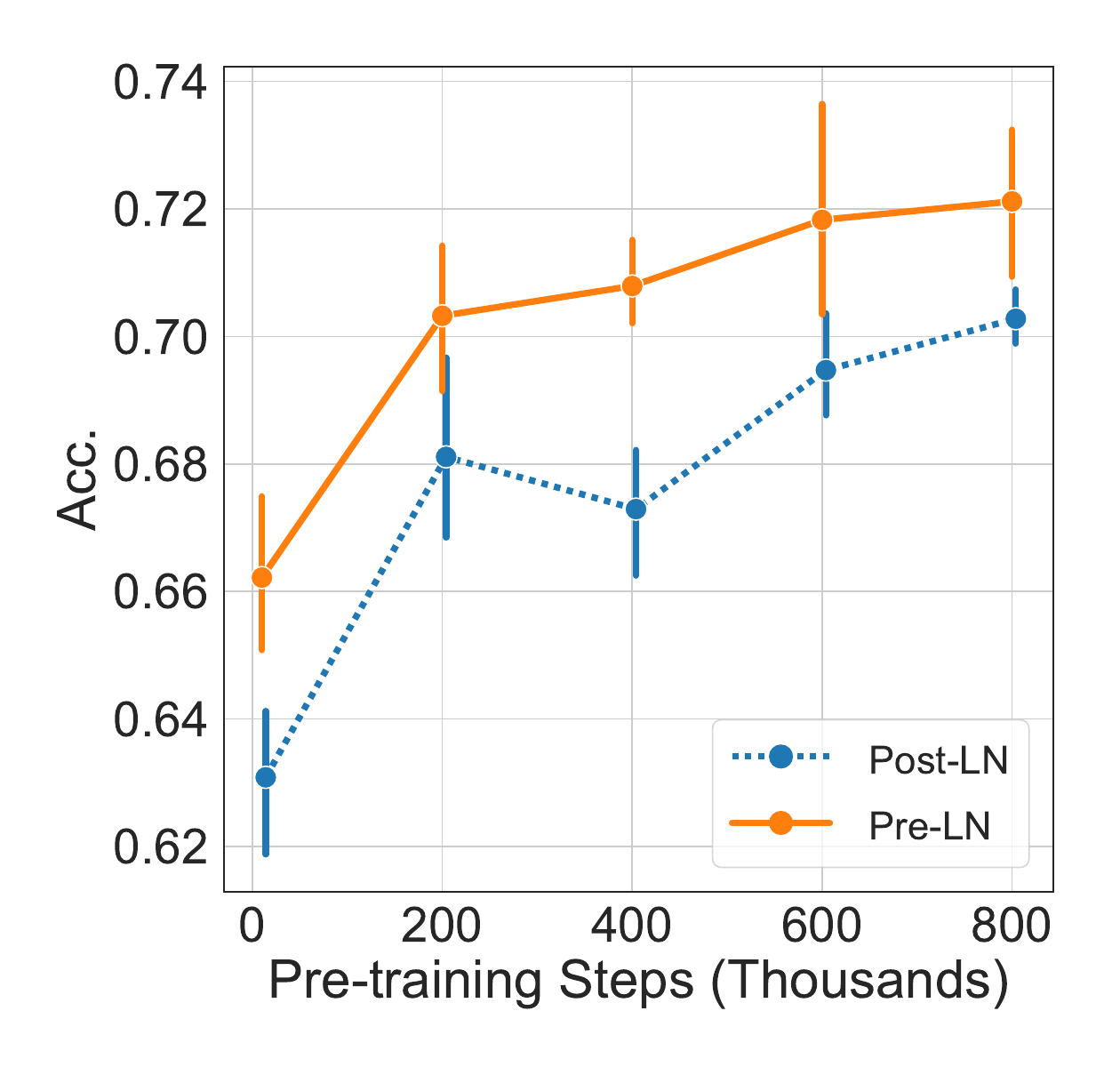}
\end{minipage}
}%
\centering
\caption{Performances of the models on  unsupervised pre-training (BERT) and downstream tasks}
\end{figure*}

We use \texttt{base} model configuration in our experiments. Similar to the translation task, we train the Pre-LN BERT without the warm-up stage and compare it with the Post-LN BERT. We follow the same hyper-parameter configuration in \citet{devlin2018bert} to train the Post-LN BERT using 10k warm-up steps with $\text{lr}_{max}=1e^{-4}$. For the Pre-LN BERT, we use linear learning rate decay starting from $3e^{-4}$  without the warm-up stage. We have tried to use a larger learning rate (such as $3e^{-4}$) for the Post-LN BERT but found the optimization diverged. 

\subsection{Experiment Results}

\paragraph{Machine Translation}
We record the model checkpoints for every epoch during training and calculate the validation loss and BLEU score.  The performance of the models at different checkpoints are plotted in Figure \ref{fig:v1v2-iwslt-loss} - \ref{fig:v1v2-wmt-bleu}. 

First, as we can see from the figure, the learning rate warm-up stage is not critical anymore for training the Pre-LN Transformer and the performance of the learned model is competitive. For example, on the IWSLT14 De-En task, the BLEU score and validation loss of the Pre-LN Transformer can achieve around 34 and 4, which are comparable with the performance of the Post-LN Transformer. 

Second, the Pre-LN Transformer converges faster than the Post-LN Transformer using the same $\text{lr}_{max}$. On the IWSLT14 De-En task, the 9-th checkpoint of the Pre-LN Transformer achieves nearly the same performance (validation loss/BLEU score) as 15-th checkpoint of the Post-LN Transformer. Similar observations can be found in the WMT14 En-De task. 

Third, compared with RAdam, we find that the change of the position of layer normalization ``dominates'' the change of the optimizer. According to our experiments on the IWSLT14 De-En task, we can see that although RAdam trains the Post-LN Transformer well without the warm-up stage, it has little difference with Adam when training the Pre-LN Transformer.  

\paragraph{Unsupervised Pre-training (BERT)}

We record validation loss of the model checkpoints and plot them in Figure \ref{fig:v1v2-bert}. Similar to the machine translation tasks, the learning rate warm-up stage can be removed for the Pre-LN model. The Pre-LN model can be trained faster. For example, the Post-LN model achieves 1.69 validation loss at 500k updates while the Pre-LN model achieves similar validation loss at 700k updates, which suggests there is a 40\% speed-up rate. Note that $T_{warmup}$ (10k) is far less than the acceleration (200k) which suggests the Pre-LN Transformer is easier to optimize using larger learning rates. We also evaluate different model checkpoints on the downstream task MRPC and RTE (more details can be found in the supplementary material). The experiments results are plotted in Figure \ref{fig:v1v2-glue1} and \ref{fig:v1v2-glue2}. We can see that the Pre-LN model also converges faster on the downstream tasks.

As a summary, all the experiments on different tasks show that training the Pre-LN Transformer does not rely on the learning rate warm-up stage and can be trained much faster than the Post-LN Transformer.

\section{Conclusion and Future Work}
In this paper, we study why the learning rate warm-up stage is important in training the Transformer and show that the location of layer normalization matters. We show that in the original Transformer, which locates the layer normalization outside the residual blocks, the expected gradients of the parameters near the output layer are large at initialization. This leads to an unstable training when using a large learning rate. We further show that the Transformer which locates the layer normalization inside the residual blocks, can be trained without the warm-up stage and converges much faster. In the future, we will investigate other strategies of positioning the layer normalization and understand the optimization of Transformer from a theoretical perspective. 

\bibliography{example_paper}
\bibliographystyle{icml2020}


\appendix

\section{Experimental Settings}

\subsection{Machine Translation}
\paragraph{Experiment on Section 3}
The training/validation/test sets of the IWSLT14 German-to-English (De-En) task contain about 153K/7K/7K sentence pairs, respectively. We use a vocabulary of 10K tokens based on a joint source and target byte pair encoding (BPE) \citep{sennrich2015neural}. All of our experiments use a Transformer architecture with a 6-layer encoder and 6-layer decoder. The size of embedding is set to 512, the size of hidden nodes in attention sub-layer and position-wise feed-forward network sub-layer are set to 512 and 1024, and the number of heads is set to 4. Label smoothed cross entropy is used as the objective function by setting $\epsilon = 0.1$ \citep{szegedy2016rethinking}, and we apply dropout with a ratio 0.1. The batch size is set to be 4096 tokens. When we decode translation results from the model during inference, we set beam size as 5 and the length penalty as 1.2.

\paragraph{Experiment on Section 4}
The configuration of IWLST14 De-En task is the same as in Section 3\footnote{The Pre-LN Transformer can get state-of-the-art performance (35.5 test BLEU) on the IWSLT14 DE-EN task by setting initial learning rate to be $7.5e^{-4}$ and decaying it at the 8000 update steps followed by the inverse square root learning rate scheduler. The dropout is set to be 0.3, attention dropout is set to be 0.1. The batch size is set to be 8192.}.
For the WMT14 En-De task, we replicate the setup of \citep{vaswani2017attention}, which consists of about 4.5M training parallel sentence pairs, and uses a 37K vocabulary based on a joint source and target BPE. Newstest2013 is used as the validation set, and Newstest2014 is used as the test set. One of the basic configurations of the Transformer architecture is the \texttt{base} setting, which consists of a 6-layer encoder and 6-layer decoder. The size of the hidden nodes and embeddings are set to 512. The number of heads is 8. Label smoothed cross entropy is used as the objective function by setting $\epsilon= 0.1$. The batch size is set to be 8192 tokens per GPU on 16 NVIDIA Tesla P40 GPUs.

\subsection{Unsupervised Pretraining}

We follow \citet{devlin2018bert} to use English Wikipedia corpus and BookCorpus for the pre-training. As the dataset BookCorpus \citep{moviebook} is no longer freely distributed. We follow the suggestions from \citet{devlin2018bert} to crawl and collect BookCorpus\footnote{https://www.smashwords.com} on our own. The concatenation of two datasets includes roughly 3.4B words in total, which is comparable with the data corpus used in \citet{devlin2018bert}. We first segment documents into sentences with Spacy\footnote{https://spacy.io}; Then, we normalize, lower-case, and tokenize texts using Moses \citep{koehn2007moses} and apply BPE\citep{BPE}. We randomly split documents into one training set and one validation set. The training-validation ratio for pre-training is 199:1. All experiments are conducted on 32 NVIDIA Tesla P40 GPUs.

The base model in \citet{devlin2018bert} consists of 12 Transformer layers. The size of hidden nodes and embeddings are set to 768, and the number of heads is set to 12. 

\subsection{GLUE Dataset}

\paragraph{MRPC} The Microsoft Research Paraphrase Corpus \citep{dolan2005automatically} is a corpus of sentence pairs automatically extracted from online news sources, with human annotations for whether the sentences in the pair are semantically equivalent, and the task is to predict the equivalence. The performance is evaluated by the accuracy.

\paragraph{RTE} The Recognizing Textual Entailment (RTE) datasets come from a series of annual textual entailment challenges \citep{bentivogli2009fifth}. The task is to predict whether sentences in a sentence pair are entailment. The performance is evaluated by the accuracy.

\paragraph{Fine-tuning on GLUE tasks} We use the validation set for evaluation. To fine-tune the models, following \citet{devlin2018bert, liu2019roberta}, we search the optimization hyper-parameters in a search space including different batch sizes (16/32), learning rates ($1e^{-5}$ - $1e^{-4}$) and number of epochs (3-8). We find that the validation accuracy are sensitive to random seeds, so we repeat fine-tuning on each task for 6 times using different random seeds and compute the 95\% confidence interval of validation accuracy.

\section{Proof of Lemma 1}
\begin{proof}
Denote $X=(X_1,X_2,...,X_d)$ in which $X_i$ are i.i.d. Gaussian random variables with distribution $N(0,\sigma^2)$. Denote $\rho_X(x)$ as the probability density function of $X_1$. Then
$\mathbb{E}(\|\text{ReLU}(X)\|_2^2)=\sum_{i=1}^d \mathbb{E}[\text{ReLU}(X_i)^2]
=\sum_{i=1}^d \mathbb{E}[\text{ReLU}(X_i)^2|X_i\geq 0]\mathbb{P}(X_i\geq 0)
=\frac{d}{2}\mathbb{E}[\text{ReLU}(X_1)^2|X_1\geq 0]=\frac{d}{2}\mathbb{E}[X_1^2|X_1\geq 0]
=\frac{d}{2}\int_{-\infty}^{+\infty} x^2 \rho_{X|X>0}(x)dx
=\frac{d}{2}\int_{0}^{+\infty} x^2 2\rho_{X}(x)dx
=\frac{1}{2}\sigma^2 d$.
\end{proof}
\section{Proof of Lemma 2}
\begin{proof}
At initialization, the layer normalization is computed as $\text{LN}(v) = \frac{v - \mu}{\sigma}$. It is easy to see that layer normalization at initialization projects any vector $v$ onto the $d-1$-sphere of radius $\sqrt{d}$ since $\|\text{LN}(v)\|_2^2=\|\frac{v - \mu}{\sigma}\|_2^2=\frac{\sum_{k=1}^d(v_{k} -\mu)^2}{\sigma^2}=d$. 

We first estimate the expected $l_2$ norm of each intermediate output $x^{post,1}_{l,i},\cdots,x^{post,5}_{l,i}$ for $l>0$. Using Xavier initialization, the elements in $W^{V,l}$ are i.i.d. Gaussian random variables sampled from $N(0,1/d)$. Since $\|x_{l,i}^{post}\|_2^2=d$ by the definition of Layer Normalization when $l>0$,  we have
\begin{align}
\mathbb{E}(\|x_{l,i}^{post,2}\|_2^2)=&\mathbb{E}(\|x_{l,i}^{post}\|_2^2)+\mathbb{E}(\|x_{l,i}^{post,1}\|_2^2)\nonumber\\
&+2\mathbb{E}(x_{l,i}^{post,1}{x_{l,i}^{post}}^{\top})\\
=&\mathbb{E}(\|x_{l,i}^{post}\|_2^2)+\mathbb{E}(\|x_{l,i}^{post,1}\|_2^2)\nonumber\\
&+\frac{2}{n}\mathbb{E}(\sum_{j=1}^{n}x_{l, j}^{post} W^{V, l}{x_{l,i}^{post}}^{\top})\\
=&\mathbb{E}(\|x_{l,i}^{post}\|_2^2)+\mathbb{E}(\|x_{l,i}^{post,1}\|_2^2)\\
=&\mathbb{E}(\|x_{l,i}^{post}\|_2^2)+\mathbb{E}(\|\frac{1}{n}\sum_{i=1}^n x_{l,i}^{post}\|_2^2)\\
\leq &2d
\end{align}
and $\mathbb{E}(\|x_{l,i}^{post,2}\|_2^2)=\mathbb{E}(\|x_{l,i}^{post}\|_2^2)+\mathbb{E}(\|x_{l,i}^{post,1}\|_2^2)=\mathbb{E}(\|x_{l,i}^{post}\|_2^2)+\mathbb{E}(\|\frac{1}{n}\sum_{i=1}^n x_{l,i}^{post}\|_2^2)\geq \mathbb{E}(\|x_{l,i}^{post}\|_2^2)=d$. 

Similarly, we have $\|x_{l,i}^{post,3}\|_2^2=d$ by the definition of Layer Normalization. Again, for the ReLU activation function, the elements in $W^{1,l}$ and $W^{2,l}$ are i.i.d. Gaussian random variables sampled from $N(0,1/d)$. According to Lemma 1, we have 
\begin{align}
\mathbb{E}(\|x_{l,i}^{post,4}\|_2^2)
=&\mathbb{E}(\|\text{ReLU}(x_{l,i}^{post,3}W^{1,l})W^{2,l}\|_2^2)\\
=&\mathbb{E}(\mathbb{E}(\mathbb{E}(\|\text{ReLU}(x_{l,i}^{post,3}W^{1,l})W^{2,l}\|_2^2\nonumber\\
&|x_{l,i}^{post,3},W^{1,l})|x_{l,i}^{post,3}))\\
=&\mathbb{E}(\mathbb{E}(\|\text{ReLU}(x_{l,i}^{post,3}W^{1,l})\|_2^2|x_{l,i}^{post,3}))\\
=&\mathbb{E}(\frac{1}{2}\|x_{l,i}^{post,3}\|_2^2)=\frac{d}{2}
\end{align}

Based on this, we can estimate the scale of $\mathbb{E}(\|x_{l,i}^{post,5}\|_2^2)$ as follows.
\begin{align}
\mathbb{E}(\|x_{l,i}^{post,5}\|_2^2)=&\mathbb{E}(\|x_{l,i}^{post,3}\|_2^2)+\mathbb{E}(\|x_{l,i}^{post,4}\|_2^2)\nonumber\\
&+2\mathbb{E}(x_{l,i}^{post,3}{x_{l,i}^{post,4}}^{\top})\\
=&\mathbb{E}(\|x_{l,i}^{post,3}\|_2^2)+\mathbb{E}(\|x_{l,i}^{post,4}\|_2^2)\nonumber\\
&+\frac{2}{n}\mathbb{E}(\sum_{j=1}^{n}\text{ReLU}(x^{post,3}_{l,j}W^{1,l}) W^{2, l}{x_{l,i}^{post,3}}^{\top})\\
=&\mathbb{E}(\|x_{l,i}^{post,3}\|_2^2)+\mathbb{E}(\|x_{l,i}^{post,4}\|_2^2)=d+\frac{d}{2}= \frac{3}{2}d
\end{align}

Using similar technique we can bound $\mathbb{E}(\|x_{l,i}^{pre}\|_2^2)$ for the Pre-LN Transformer. 
\begin{align}
\mathbb{E}(\|x_{l,i}^{pre,3}\|_2^2)=&\mathbb{E}(\|x_{l,i}^{pre}\|_2^2)+\mathbb{E}(\|x_{l,i}^{pre,2}\|_2^2)\nonumber\\
&+2\mathbb{E}(x_{l,i}^{pre,2}{x_{l,i}^{pre}}^{\top})\\
=&\mathbb{E}(\|x_{l,i}^{pre}\|_2^2)+\mathbb{E}(\|x_{l,i}^{pre,2}\|_2^2)\nonumber\\
&+\frac{2}{n}\mathbb{E}(\sum_{j=1}^{n}x_{l, j}^{pre,1} W^{V, l}{x_{l,i}^{pre}}^{\top})\\
=&\mathbb{E}(\|x_{l,i}^{pre}\|_2^2)+\mathbb{E}(\|x_{l,i}^{pre,2}\|_2^2)\\
=&\mathbb{E}(\|x_{l,i}^{pre}\|_2^2)+\mathbb{E}(\|\frac{1}{n}\sum_{i=1}^n x_{l,i}^{pre,1}\|_2^2)
\end{align}
It is easy to see that we have $\mathbb{E}(\|x_{l,i}^{pre}\|_2^2) \leq \mathbb{E}(\|x_{l,i}^{pre,3}\|_2^2) \leq \mathbb{E}(\|x_{l,i}^{pre}\|_2^2)+d$. And similar to (10)-(12),
\begin{align}
\mathbb{E}(\|x_{l+1,i}^{pre}\|_2^2)=&\mathbb{E}(\|x_{l,i}^{pre,3}\|_2^2)+\mathbb{E}(\|x_{l,i}^{pre,5}\|_2^2)\nonumber\\
&+2\mathbb{E}(x_{l,i}^{pre,3}{x_{l,i}^{pre,5}}^{\top})\\
=&\mathbb{E}(\|x_{l,i}^{pre,3}\|_2^2)+\mathbb{E}(\|x_{l,i}^{pre,5}\|_2^2)\\
=&\mathbb{E}(\|x_{l,i}^{pre,3}\|_2^2)+\frac{1}{2}d
\end{align}

Combining both, we have $\mathbb{E}(\|x_{l,i}^{pre}\|_2^2) +\frac{1}{2}d\leq\mathbb{E}(\|x_{l+1,i}^{pre}\|_2^2)\leq \mathbb{E}(\|x_{l,i}^{pre}\|_2^2)+\frac{3}{2}d$. Then we have  $(1+\frac{l}{2})d\leq\mathbb{E}(\|x_{l,i}^{pre}\|_2^2)\leq(1+\frac{3l}{2})d$ by induction.

\end{proof}

\section{Proof of Lemma 3}
The proof of Lemma 3 is based on Lemma 4.1: 
\begin{lemma}
Let $\alpha \in \mathbb{R}^d$ be a vector such that $\|\alpha\|_2=1$, then the eigenvalue of $I-\alpha^{\top}\alpha$ is either 1 or 0.
\end{lemma}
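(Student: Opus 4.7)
The plan is to prove the statement by directly computing the spectrum of the rank-one matrix $\alpha^{\top}\alpha$ and then shifting by the identity. Under the convention that makes $I-\alpha^{\top}\alpha$ a $d\times d$ matrix, $\alpha$ is treated as a $1\times d$ row vector, so $\alpha^{\top}\alpha$ is the outer product of $\alpha^{\top}$ with itself, i.e., a symmetric $d\times d$ matrix of rank at most one.

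First I would exhibit an explicit eigenvector. The vector $\alpha^{\top}$ satisfies $(\alpha^{\top}\alpha)\alpha^{\top}=\alpha^{\top}(\alpha\alpha^{\top})=\|\alpha\|_2^{2}\,\alpha^{\top}=\alpha^{\top}$, using the normalization $\|\alpha\|_2=1$. So $\alpha^{\top}$ is an eigenvector of $\alpha^{\top}\alpha$ with eigenvalue $1$. Next, for any $v\in\mathbb{R}^{d}$ orthogonal to $\alpha^{\top}$ (i.e., $\alpha v=0$), we have $(\alpha^{\top}\alpha)v=\alpha^{\top}(\alpha v)=0$, so every such $v$ is an eigenvector with eigenvalue $0$. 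The orthogonal complement of $\mathrm{span}(\alpha^{\top})$ has dimension $d-1$, and together with $\alpha^{\top}$ these vectors span $\mathbb{R}^{d}$. Hence the spectrum of $\alpha^{\top}\alpha$ consists of the eigenvalue $1$ with multiplicity $1$ and the eigenvalue $0$ with multiplicity $d-1$.

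Finally I would apply the standard fact that eigenvectors of $\alpha^{\top}\alpha$ are eigenvectors of $I-\alpha^{\top}\alpha$ with eigenvalues shifted by $1-\lambda$. This converts the eigenvalue $1$ into $0$ and each eigenvalue $0$ into $1$, so the spectrum of $I-\alpha^{\top}\alpha$ consists only of $0$ (simple) and $1$ (multiplicity $d-1$), which is exactly the claim.

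There is no real obstacle here: the argument is a one-line consequence of $\alpha^{\top}\alpha$ being an orthogonal projector onto the line spanned by $\alpha^{\top}$, and $I-\alpha^{\top}\alpha$ being the complementary projector. The only point worth being explicit about is the vector/row-vector convention so that $\alpha^{\top}\alpha$ is unambiguously the rank-one outer product rather than the scalar inner product; once that is fixed, the diagonalization above is immediate.
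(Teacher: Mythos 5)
Your proposal is correct and follows essentially the same route as the paper: both identify $\alpha^{\top}$ (respectively $\alpha$ as a row vector) as the eigenvector for the rank-one projector and the orthogonal complement as the $(d-1)$-dimensional eigenspace for the complementary eigenvalue; the paper simply works directly with $I-\alpha^{\top}\alpha$ on an orthonormal basis $\{e_1=\alpha, e_2,\dots,e_d\}$ rather than diagonalizing $\alpha^{\top}\alpha$ first and then shifting by the identity. Your explicit remark about the row-vector convention is a worthwhile clarification that the paper leaves implicit.
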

\begin{proof}
Let $\{e_1,...,e_d\}$ be unit vectors such that $e_1=\alpha$ and $e_i\bot e_j$ for all $(i,j)$. Then we have $e_1(I-\alpha^{\top}\alpha)=e_1-e_1\alpha^{\top}\alpha=e_1-\alpha=0$ and $e_i(I-\alpha^{\top}\alpha)=e_i-e_i\alpha^{\top}\alpha=e_i$ for $i\neq 1$. So $e_i$ are all the eigenvectors of $I-\alpha^{\top}\alpha$, and their corresponding eigenvalues are $(0,1,1,...,1)$. Hence we complete our proof.
\end{proof}

\begin{proof}[Proof of Lemma 3]
Denote $y=x(I-\frac{1}{d}\textbf{1}^{\top}\textbf{1})$, where $\textbf{1}=(1,1,...,1)\in \mathbb{R}^d$, then the layer normalization can be rewritten as
\begin{align}
\text{LN}(x)_i=\frac{y_i}{\sqrt{\frac{1}{d}\sum_{j=1}^d y_j^2}}
\end{align}

We explicitly calculate the Jacobian of layer normalization as
\begin{align}
\frac{\partial \text{LN}(x)_i}{\partial y_j}=&\frac{\partial }{\partial y_j}(\frac{y_i}{\sqrt{\frac{1}{d}\sum_{k=1}^n y_k^2}})\\
=&\frac{\delta_{ij}\sqrt{\frac{1}{d}\sum_{k=1}^n y_k^2}-y_i\frac{\frac{1}{d} y_j }{\sqrt{\frac{1}{d}\sum_{k=1}^n y_k^2} }}{\frac{1}{d}\sum_{k=1}^n y_k^2}\\
=&\sqrt{d}\frac{\delta_{ij}\|y\|_2^2-y_iy_j}{\|y\|_2^{\frac{3}{2}}}=\frac{\sqrt{d}}{\|y\|_2}(\delta_{ij}-\frac{y_iy_j}{\|y\|_2^2})
\end{align}
where $\delta_{ij}=1$ when $i=j$ and $\delta_{ij}=0$ when $i\neq j$. In the matrix form,
\begin{align}
    \frac{\partial \text{LN}(x)}{\partial y}=\frac{\sqrt{d}}{\|y\|_2}(I-\frac{y^{\top}y}{\|y\|_2^2})
\end{align}
and
\begin{align}
\mathbf{J}_{LN}(x)=&\frac{\partial \text{LN}(x)}{\partial x}\\
=&\frac{\partial \text{LN}(x)}{\partial y}\frac{\partial y}{\partial x}\\
=&\sqrt{d}\frac{1}{\|y\|_2}(I-\frac{y^{\top}y}{\|y\|_2^2})(I-\frac{1}{d}\textbf{1}^{\top}\textbf{1}).
\end{align}
Since the eigenvalue of the matrix $(I-\frac{y^{\top}y}{\|y\|_2^2})$ and $(I-\frac{1}{d}\textbf{1}^{\top}\textbf{1})$ are either 1 or 0 (by Lemma 4.1), we have $\|(I-\frac{y^{\top}y}{\|y\|_2^2})\|_2=\mathcal{O}(1)$ and $\|(I-\frac{1}{d}\textbf{1}^{\top}\textbf{1})\|_2=\mathcal{O}(1)$. So the spectral norm of $\mathbf{J}_{LN}(x)$ is 
\begin{align}
\|\mathbf{J}_{LN}(x)\|_2=\mathcal{O}(\frac{\sqrt{d}}{\|y\|_2})=\mathcal{O}(\frac{\sqrt{d}}{\|x\|_2})
\end{align}

\end{proof}

\section{Proof of Theorem 1}
The proof of Theorem 1 is based on Lemma 4.2: 
\begin{lemma}
Let $Y$ be a random variable that is never larger than B. Then for all $a<B$,
\begin{equation}
\operatorname{Pr}[Y\leq a]\leq\frac{\mathbb{E}[B-Y]}{B-a}
\end{equation}
\end{lemma}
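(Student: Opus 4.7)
The plan is to reduce this to a direct application of Markov's inequality after a sign flip. The hypothesis $Y \leq B$ almost surely means that the auxiliary variable $Z := B - Y$ is nonnegative, and Markov's inequality is exactly the tool that produces an upper tail bound for a nonnegative random variable in terms of its expectation. So the strategy is: reformulate the event $\{Y \leq a\}$ in terms of $Z$, check that the resulting threshold is strictly positive (which is where the hypothesis $a < B$ is used), and then invoke Markov.

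Concretely, first I would introduce $Z = B - Y$ and note $Z \geq 0$ with probability one. Second, I would rewrite
\begin{equation*}
\{Y \leq a\} \;=\; \{B - Y \geq B - a\} \;=\; \{Z \geq B - a\}.
\end{equation*}
Third, since $a < B$ the number $B - a$ is strictly positive, so Markov's inequality applied to the nonnegative variable $Z$ at level $B - a$ yields
\begin{equation*}
\Pr[Z \geq B - a] \;\leq\; \frac{\mathbb{E}[Z]}{B - a} \;=\; \frac{\mathbb{E}[B - Y]}{B - a}.
\end{equation*}
Chaining the equality of events with this inequality gives the claimed bound.

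There is essentially no hard step here; the only thing to be careful about is that the proof genuinely requires $a < B$ (so the denominator is positive and Markov's inequality is applicable at a nontrivial threshold), and that $Y$ itself is not assumed to be nonnegative — that is precisely why one pivots to the nonnegative variable $B - Y$ rather than trying to apply Markov to $Y$ directly. No moment assumptions beyond existence of $\mathbb{E}[Y]$ (equivalently $\mathbb{E}[B-Y]$) are needed, which is consistent with how the lemma will later be invoked when bounding probabilities of the concentrated norm random variables in the main theorem.
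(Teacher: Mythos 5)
Your proposal is correct and is essentially identical to the paper's own proof: both define the nonnegative variable $B-Y$, rewrite $\{Y\leq a\}$ as $\{B-Y\geq B-a\}$, and apply Markov's inequality at the positive threshold $B-a$. Your added remarks on where $a<B$ is used and why one cannot apply Markov to $Y$ directly are accurate but do not change the argument.
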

\begin{proof}
Let $X=B-Y$, then $X\geq 0$ and Markov's inequality tells us that
\begin{equation}
\operatorname{Pr}[X\geq B-a]\leq \frac{\mathbb{E}[X]}{B-a}
\end{equation}
Hence
\begin{equation}
\operatorname{Pr}[Y\leq a]\leq\frac{\mathbb{E}[B-Y]}{B-a}
\end{equation}
\end{proof}
\begin{proof}[Proof of Theorem 1]
We prove Theorem 1 by estimating each element of the gradient matrix. Namely, we will analyze $\frac{\partial \tilde{\mathcal{L}}}{\partial W^{2,L}_{pq}}$ for $p,q \in \{1,...,d\}$. The loss of the post-LN Transformer can be written as
\begin{align}
\tilde{\mathcal{L}}(x_{L+1,1}^{post},...,x_{L+1,n}^{post})=\frac{1}{n}\sum_{i=1}^n \mathcal{L}(x_{L+1,i}^{post})
\end{align}

Through back propagation, for each $i\in\{1,2,...,n\}$ the gradient of $\mathcal{L}(x_{L+1,i})$ with respect to the last layer's parameter $W^{2,L}$ in the post-LN setting can be written as:

\begin{align}
\frac{\partial \mathcal{L}(x_{L+1,i}^{post})}{\partial W^{2,L}_{pq}}=&\frac{\partial \mathcal{L}(x_{L+1,i}^{post})}{\partial x_{L+1,i}^{post}}\frac{\partial x_{L+1,i}^{post}}{\partial x_{L,i}^{post,5}}\frac{\partial x_{L,i}^{post,5}}{\partial x_{L,i}^{post,4}}\frac{\partial x_{L,i}^{post,4}}{\partial W^{2,L}_{pq}} \\
=&\frac{\partial \mathcal{L}(x_{L+1,i}^{post})}{\partial x_{L+1,i}^{post}}\mathbf{J}_{LN}(x_{L,i}^{post,5}) \frac{\partial x_{L,i}^{post,4}}{\partial W^{2,L}_{pq}}\\
=&\frac{\partial \mathcal{L}(x_{L+1,i}^{post})}{\partial x_{L+1,i}^{post}}\mathbf{J}_{LN}(x_{L,i}^{post,5})(0,0,...,\nonumber\\
&[\text{ReLU}(x_{L,i}^{post,3}W^{1,L})]_p,...,0)^{\top}
\end{align}

Here $[\text{ReLU}(x_{L,i}^{post,3}W^{1,L})]_p$ means the $p$-th element of $\text{ReLU}(x_{L,i}^{post,3}W^{1,L})$. So the absolute value of $\frac{\partial \mathcal{L}(x_{L+1,i}^{post})}{\partial W^{2,L}_{pq}}$ can be bounded by

\begin{align}
|\frac{\partial \mathcal{L}(x_{L+1,i}^{post})}{\partial W^{2,L}_{pq}}|\leq &\|\frac{\partial \mathcal{L}(x_{L+1,i}^{post})}{\partial x_{L+1,i}^{post}}\|_2\|\mathbf{J}_{LN}(x_{L,i}^{post,5})\|_2\nonumber\\
&\|(0,0,...,[\text{ReLU}(x_{L,i}^{post,3}W^{1,L})]_p,...,0)^{\top}\|_2\\
=&\|\frac{\partial \mathcal{L}(x_{L+1,i}^{post})}{\partial x_{L+1,i}^{post}}\|_2\|\mathbf{J}_{LN}(x_{L,i}^{post,5})\|_2\nonumber\\
&|[\text{ReLU}(x_{L,i}^{post,3}W^{1,L})]_p|
\end{align}
which implies
\begin{align}
|\frac{\partial \mathcal{L}(x_{L+1,i}^{post})}{\partial W^{2,L}_{pq}}|^2\leq& \|\frac{\partial \mathcal{L}(x_{L+1,i}^{post})}{\partial x_{L+1,i}^{post}}\|^2_2\|\mathbf{J}_{LN}(x_{L,i}^{post,5})\|^2_2\nonumber\\
&|[\text{ReLU}(x_{L,i}^{post,3}W^{1,L})]_p|^2
\end{align}
Since all the derivatives are bounded, we have $\|\frac{\partial \mathcal{L}(x_{L+1,i}^{post})}{\partial x_{L+1,i}^{post}}\|^2_2=\mathcal{O}(1)$. So 
\begin{align}
&|\frac{\partial \mathcal{L}(x_{L+1,i}^{post})}{\partial W^{2,L}_{pq}}|^2\nonumber\\
=&\mathcal{O}(\left[\|\mathbf{J}_{LN}(x_{L,i}^{post,5})\|^2_2|[\text{ReLU}(x_{L,i}^{post,3}W^{1,L})]_p|^2\right])
\end{align}

Since $\|x_{L,i}^{post,3}\|_2^2=d$, $[x_{L,i}^{post,3}W^{1,L}]_p$ has distribution $N(0,1)$, using Chernoff bound we have
$$\operatorname{Pr}[|[x_{L,i}^{post,3}W^{1,L}]_p|\geq a_0]\leq \operatorname{exp}(-\frac{a_0^2}{2}).$$
So $$\operatorname{Pr}[\text{ReLU}([x_{L,i}^{post,3}W^{1,L}]_p)^2\geq 2\ln{100d}]\leq \frac{0.01}{d}.$$
Thus with probability at least $0.99$, for all $p=1,2,...,d$ we have $\text{ReLU}([x_{L,i}^{post,3}W^{1,L}]_p)^2\leq 2\ln{100d}$.

Since with probability $1-\delta(\epsilon)$, $\frac{|\|x^{post,5}_{L,i}\|_2^2-\mathbb{E}\|x^{post,5}_{L,i}\|_2^2|}{\mathbb{E}\|x^{post,5}_{L,i}\|_2^2}\leq \epsilon$, we have $\|x^{post,5}_{L,i}\|_2^2\leq (1+\epsilon)\mathbb{E}\|x^{post,5}_{L,i}\|_2^2$. Using Lemma 4.2, we have
\begin{align}
\operatorname{Pr}[\|x^{post,5}_{L,i}\|_2^2\leq& \alpha_0 \mathbb{E}\|x^{post,5}_{L,i}\|_2^2]\\\leq & \frac{(1+\epsilon)\mathbb{E}\|x^{post,5}_{L,i}\|_2^2-\mathbb{E}\|x^{post,5}_{L,i}\|_2^2}{(1+\epsilon-\alpha_0)\mathbb{E}\|x^{post,5}_{L,i}\|_2^2}\\
=&\frac{\epsilon}{1+\epsilon-\alpha_0}
\end{align}
for an arbitrary constant $\alpha_0>0$, which equals
\begin{align}
\operatorname{Pr}[\|x^{post,5}_{L,i}\|_2^2\geq \alpha_0 \mathbb{E}\|x^{post,5}_{L,i}\|_2^2]
\geq 1-\frac{\epsilon}{1+\epsilon-\alpha_0}
\end{align}
So according to union bound, with probability at least $0.99-\delta(\epsilon)-\frac{\epsilon}{1+\epsilon-\alpha_0}$ we have $|\frac{\partial\mathcal{L}(x_{L+1,i}^{post})}{\partial W^{2,L}_{pq}}|^2=\mathcal{O}(\left[\|\mathbf{J}_{LN}(x_{L,i}^{post,5})\|^2_2|[\text{ReLU}(x_{L,i}^{post,3}W^{1,L})]_p|^2\right])\leq \mathcal{O}(\frac{2d\ln{100d}}{\|x_{L,i}^{post,5}\|_2^2})\leq \mathcal{O}(\frac{d\ln{d}}{\alpha_0\mathbb{E}\|x_{L,i}^{post,5}\|_2^2})=\mathcal{O}(\frac{\ln{d}}{\alpha_0})$. So we have
\begin{align}
 |\frac{\partial \tilde{\mathcal{L}}}{\partial W^{2,L}_{pq}}|^2= &|\frac{1}{n}\sum_{i=1}^n \frac{\partial \mathcal{L}(x_{L+1,i}^{post})}{\partial W^{2,L}_{pq}}|^2\\
 \leq &\frac{1}{n}\sum_{i=1}^n  |\frac{\partial \mathcal{L}(x_{L+1,i}^{post})}{\partial W^{2,L}_{pq}}|^2=\mathcal{O}(\frac{\ln{d}}{\alpha_0})
\end{align}
and $$ \|\frac{\partial \tilde{\mathcal{L}}}{\partial W^{2,L}}\|_F= \sqrt{\sum_{p,q=1}^d|\frac{\partial \tilde{\mathcal{L}}}{\partial W^{2,L}_{pq}}|^2}=\mathcal{O}(\sqrt{\frac{d^2\ln{d}}{\alpha_0}})$$.

The loss of the pre-LN Transformer can be written as
\begin{align}
\tilde{\mathcal{L}}(x_{Final,1}^{pre},...,x_{Final,n}^{pre})=\frac{1}{n}\sum_{i=1}^n \mathcal{L}(x_{Final,i}^{pre})
\end{align}
Using the same technique, in the pre-LN setting the gradient of $\mathcal{L}(x_{Final,i}^{pre})$ with respect to the last layer's parameter $W^{2,L}$ can be written as
\begin{align}
\frac{\partial \mathcal{L}(x_{Final,i}^{pre})}{\partial W^{2,L}_{pq}}
=&\frac{\partial \mathcal{L}(x_{Final,i}^{pre})}{\partial x_{Final,i}^{pre}}
\frac{\partial x_{Final,i}^{pre}}{\partial x_{L+1,i}^{pre}}
\frac{\partial x_{L+1,i}^{pre}}{\partial x_{L,i}^{pre,5}}
\frac{\partial x_{L,i}^{pre,5}}{\partial W^{2,L}_{pq}}\\
=&\frac{\partial \mathcal{L}(x_{Final,i}^{pre})}{\partial x_{Final,i}^{pre}}\mathbf{J}_{LN}(x_{L+1,i}^{pre})(0,0,...,\nonumber\\
&[\text{ReLU}(x_{L,i}^{pre,4}W^{1,L})]_p,...,0)^{\top}
\end{align}
So the absolute value of each component of the gradient is bounded by
\begin{align}
|\frac{\partial \mathcal{L}(x_{Final,i}^{pre})}{\partial W^{2,L}_{pq}}|\leq &\|\frac{\partial \mathcal{L}(x_{Final,i}^{pre})}{\partial x_{Final,i}^{pre}}\|_2\|\mathbf{J}_{LN}(x_{L+1,i}^{pre})\|_2\nonumber\\
&\|(0,0,...,[\text{ReLU}(x_{L,i}^{pre,4}W^{1,L})]_p,...,0)\|_2\\
=&\|\frac{\partial \mathcal{L}(x_{Final,i}^{pre})}{\partial x_{Final,i}^{pre}}\|_2\|\mathbf{J}_{LN}(x_{L+1,i}^{pre})\|_2\nonumber\\
&|[\text{ReLU}(x_{L,i}^{pre,4}W^{1,L})]_p|
\end{align}

Since $\| x_{L,i}^{pre,4}\|_2^2=d$ and $[ x_{L,i}^{pre,4}W^{1,L}]_p$ obeys distribution $N(0,1)$, using Chernoff bound we have
$$\operatorname{Pr}[|[ x_{L,i}^{pre,4}W^{1,L}]_p|\geq a_0]\leq \operatorname{exp}(-\frac{a_0^2}{2}).$$
So $$\operatorname{Pr}[\text{ReLU}([ x_{L,i}^{pre,4}W^{1,L}]_p)^2\geq 2\ln{100d}]\leq \frac{0.01}{d}.$$
So with probability at least $0.99$, for all $p=1,2,...,d$ we have $\text{ReLU}([ x_{L,i}^{pre,4}W^{1,L}]_p)^2\leq 2\ln{100d}$.

Since with probability $1-\delta(\epsilon)$, $\frac{|\|x^{pre}_{L+1,i}\|_2^2-\mathbb{E}\|x^{pre}_{L+1,i}\|_2^2|}{\mathbb{E}\|x^{pre}_{L+1,i}\|_2^2}\leq \epsilon$, we have $\|x^{pre}_{L+1,i}\|_2^2\leq (1+\epsilon)\mathbb{E}\|x^{pre}_{L+1,i}\|_2^2$. Using Lemma 5, we have
\begin{align}
\operatorname{Pr}[\|x^{pre}_{L+1,i}\|_2^2\leq &\alpha_0 \mathbb{E}\|x^{pre}_{L+1,i}\|_2^2]\\
\leq& \frac{(1+\epsilon)\mathbb{E}\|x^{pre}_{L+1,i}\|_2^2-\mathbb{E}\|x^{pre}_{L+1,i}\|_2^2}{(1+\epsilon-\alpha_0)\mathbb{E}\|x^{pre}_{L+1,i}\|_2^2}\\
=&\frac{\epsilon}{1+\epsilon-\alpha_0}
\end{align}
which equals
\begin{align}
\operatorname{Pr}[\|x^{pre}_{L+1,i}\|_2^2\geq \alpha_0 \mathbb{E}\|x^{pre}_{L+1,i}\|_2^2]\geq 1-\frac{\epsilon}{1+\epsilon-\alpha_0}
\end{align}
According to union bound, with probability $0.99-\delta(\epsilon)-\frac{\epsilon}{1+\epsilon-\alpha_0}$ we have $|\frac{\partial \mathcal{L}(x_{Final,i}^{pre})}{\partial W^{2,L}_{pq}}|^2=\mathcal{O}(\left[\|\mathbf{J}_{LN}(x_{L+1,i}^{pre})\|^2_2|[\text{ReLU}(x_{L,i}^{pre,4}W^{1,L})]_p|^2\right])\leq \mathcal{O}(\frac{2d\ln{100d}}{\|x_{L+1,i}^{pre}\|_2^2})\leq \mathcal{O}(\frac{d\ln{d}}{\alpha_0\mathbb{E}\|x_{L+1,i}^{pre}\|_2^2})=\mathcal{O}(\frac{\ln{d}}{\alpha_0 L})$. So we have
\begin{align}
|\frac{\partial \tilde{\mathcal{L}}}{\partial W^{2,L}_{pq}}|^2=|\frac{1}{n}\sum_{i=1}^n \frac{\partial \mathcal{L}(x_{Final,i}^{pre})}{\partial W^{2,L}_{pq}}|^2=\mathcal{O}(\frac{\ln{d}}{\alpha_0 L})
\end{align}
Thus $\|\frac{\partial \tilde{\mathcal{L}}}{\partial W^{2,L}}\|_F=\sqrt{\sum_{p,q=1}^d|\frac{\partial \tilde{\mathcal{L}}}{\partial W^{2,L}_{pq}}|^2}\leq \mathcal{O}(\sqrt{\frac{d^2\ln{d}}{\alpha_0L}})$.

Take $\alpha_0=\frac{1}{10}$, we have that with probability at least $0.99-\delta(\epsilon)-\frac{\epsilon}{0.9+\epsilon}$, for the Post-LN Transformer we have $\|\frac{\partial \tilde{\mathcal{L}}}{\partial W^{2,L}}\|_F\leq \mathcal{O}(d\sqrt{\ln{d}})$ and for the Pre-LN Transformer we have $\|\frac{\partial \tilde{\mathcal{L}}}{\partial W^{2,L}}\|_F\leq \mathcal{O}(d\sqrt{\frac{\ln{d}}{L}})$
\end{proof}

\section{Extension to other layers}

For simplicity, we denote $x_l=\text{Concat}(x_{l,1},...,x_{l,n})\in \mathbb{R}^{nd}$ and $x_l^k=\text{Concat}(x_{l,1}^k,...,x_{l,n}^k)\in \mathbb{R}^{nd}$ for $k=\{1,2,3,4,5\}$. Then in the Post-LN Transformer, the gradient of the parameters in the $l$-th layer (take $W^{2,l}$ as an example) can be written as
$$\frac{\partial \tilde{\mathcal{L}}}{\partial W^{2,l}}=\frac{\partial \tilde{\mathcal{L}}}{\partial x^{post}_{L+1}}(\prod_{j=l+1}^{L}\frac{\partial x^{post}_{j+1}}{\partial x^{post}_j})\frac{\partial x^{post}_{l+1}}{\partial W^{2,l}},$$ where $$\frac{\partial x^{post}_{j+1}}{\partial x^{post}_j}=
\frac{\partial x^{post}_{j+1}}{\partial x_j^{post,5}}\frac{\partial x_{j}^{post,5}}{\partial x_j^{post,3}}\frac{\partial x_{j}^{post,3}}{\partial x_j^{post,2}}\frac{\partial x_{j}^{post,2}}{\partial x_j^{post}}.
$$
The Jacobian matrices of the Post-LN Transformer layers are:
\begin{align}
\frac{\partial x_{j+1}^{post}}{\partial x_j^{post,5}}=&\left(
  \begin{array}{ccc}
    \mathbf{J}_{LN}(x_{j,1}^{post,5}) &  &  \\
     & \ddots &  \\
     &  & \mathbf{J}_{LN}(x_{j,n}^{post,5}) \\
  \end{array}
\right)\\
\frac{\partial x_j^{post,5}}{\partial x_j^{post,3}}=&\left(
  \begin{array}{ccc}
    I &  &  \\
     & \ddots &  \\
     &  &  I\\
  \end{array}
\right)+
\left(
  \begin{array}{ccc}
    W^{2,j} &  &  \\
     & \ddots &  \\
     &  & W^{2,j} \\
  \end{array}
\right)\nonumber\\
&
\left(
  \begin{array}{ccc}
    \mathbf{J}^{j}_1 &  &  \\
     & \ddots &  \\
     &  & \mathbf{J}^{j}_n \\
  \end{array}
\right)
\left(
  \begin{array}{ccc}
    W^{1,l} &  &  \\
     & \ddots &  \\
     &  & W^{1,l} \\
  \end{array}
\right)
\end{align}
where 
\begin{dmath*}
\mathbf{J}^{j}_i = \operatorname{diag}\left(\sigma^{\prime}\left( x_{j,i}^{post,3}\left(\mathbf{w}_{1}^{1,j}\right)^{\top}\right), 
...,\sigma^{\prime}\left( x_{j,i}^{post,3}\left(\mathbf{w}_{d}^{1,j}\right)^{\top}\right)\right) \in \mathbb{R}^{d \times d}
\end{dmath*}

\begin{gather}
\frac{\partial x_j^{post,3}}{\partial x_j^{post,2}}=\left(
  \begin{array}{ccc}
    \mathbf{J}_{LN}(x_{j,1}^{post,2}) &  &  \\
     & \ddots &  \\
     &  & \mathbf{J}_{LN}(x_{j,n}^{post,2}) \\
  \end{array}
\right)\\
\frac{\partial x_j^{post,2}}{\partial x_j^{post}}=\left(
  \begin{array}{ccc}
    I &  &  \\
     & \ddots &  \\
     &  &  I\\
  \end{array}
\right)+
\left(
  \begin{array}{ccc}
    \frac{1}{n}W^{V,j} & \cdots & \frac{1}{n}W^{V,j} \\
    \vdots & \ddots & \vdots \\
    \frac{1}{n}W^{V,j} & \cdots & \frac{1}{n}W^{V,j} \\
  \end{array}
\right)
\end{gather}

Using H\"{o}lder's inequality, we have
\begin{align}
&\mathbb{E}\|\frac{\partial x_{j+1}^{post}}{\partial x_j^{post}}\|_2\nonumber\\
\leq&
\mathbb{E}\left[\|\frac{\partial x_{j+1}^{post}}{\partial x_j^{post,5}}\|_2\|\frac{\partial x_{j}^{post,5}}{\partial x_j^{post,3}}\|_2\|\frac{\partial x_{j}^{post,3}}{\partial x_j^{post,2}}\|_2\|\frac{\partial x_{j}^{post,2}}{\partial x_j^{post}}\|_2\right]\\
\leq&\sqrt{\mathbb{E}\left[\|\frac{\partial x_{j+1}}{\partial x_j^{post,5}}\|_2^2\right]\mathbb{E}\left[\|\frac{\partial x_{j}^{post,5}}{\partial x_j^{post,3}}\|_2^2\|\frac{\partial x_{j}^{post,3}}{\partial x_j^{post,2}}\|_2^2\|\frac{\partial x_{j}^{post,2}}{\partial x_j^{post}}\|_2^2\right]}
\end{align}
Since $\frac{\partial x_{j+1}}{\partial x_j^{post,5}}=diag(\mathbf{J}_{LN}(x_{j,1}^{post,5}),...,\mathbf{J}_{LN}(x_{j,n}^{post,5}))$, we have $\sqrt{\mathbb{E}\left[\|\frac{\partial x_{j+1}^{post}}{\partial x_j^{post,5}}\|_2^2\right]} \approx \sqrt{\mathbb{E}\frac{d}{\|x_{j,1}^{post,5}\|^2_2}} \approx \sqrt{\frac{2}{3}}$ when $\|x_{j,1}^{post,5}\|^2_2$ concentrates around its expectation $\mathbb{E}\|x_{j,1}^{post,5}\|^2_2$ which equals $\frac{3}{2}d$ according to Lemma 2. Therefore, when we estimate the norm of $\frac{\partial \tilde{\mathcal{L}}}{\partial W^{2,l}}$ for post-LN transformer, there exists a term $\mathcal{O}(\frac{2}{3}^{(L-l)/2})$, which exponentially decreases as $l$ goes smaller. Similarly, in the pre-LN Transformer, the gradient can be written as
$$\frac{\partial \tilde{\mathcal{L}}}{\partial W^{2,l}}=\frac{\partial \tilde{\mathcal{L}}}{\partial x^{pre}_{Final}}\frac{\partial x^{pre}_{Final}}{\partial x^{pre}_{L+1}}(\prod_{j=l+1}^{L}\frac{\partial x^{pre}_{j+1}}{\partial x^{pre}_j})\frac{\partial x^{pre}_{l+1}}{\partial W^{V,l}},$$ where $$ \frac{\partial x^{pre}_{j+1}}{\partial x^{pre}_j}=
\frac{\partial x^{pre}_{j+1}}{\partial x_j^{pre,3}}\frac{\partial x_{j}^{pre,3}}{\partial x_j^{pre}}.
$$

The Jacobian matrices of the Pre-LN Transformer layers are:

\begin{align}
\frac{\partial x_{j+1}^{pre}}{\partial x_j^{pre,3}}=&\left(
  \begin{array}{ccc}
    I &  &  \\
     & \ddots &  \\
     &  &  I\\
  \end{array}
\right)+
\left(
  \begin{array}{ccc}
    W^{2,j} &  &  \\
     & \ddots &  \\
     &  & W^{2,j} \\
  \end{array}
\right)\nonumber\\
&
\left(
  \begin{array}{ccc}
    \mathbf{J}^{\left(h^{\prime}\right)}_1 &  &  \\
     & \ddots &  \\
     &  & \mathbf{J}^{\left(h^{\prime}\right)}_n \\
  \end{array}
\right)
\left(
  \begin{array}{ccc}
    W^{1,j} &  &  \\
     & \ddots &  \\
     &  & W^{1,j} \\
  \end{array}
\right)\nonumber\\
&
\left(
  \begin{array}{ccc}
    \mathbf{J}_{LN}(x^{pre,3}_{j,1}) &  &  \\
     & \ddots &  \\
     &  & \mathbf{J}_{LN}(x^{pre,3}_{j,n}) \\
  \end{array}
\right)
\end{align}
\begin{align}
\frac{\partial x_{j}^{pre,3}}{\partial x_j^{pre}}=&\left(
  \begin{array}{ccc}
    I &  &  \\
     & \ddots &  \\
     &  &  I\\
  \end{array}
\right)+
\left(
  \begin{array}{ccc}
    \frac{1}{n}W^{V,j} & \cdots & \frac{1}{n}W^{V,j} \\
    \vdots & \ddots & \vdots \\
    \frac{1}{n}W^{V,j} & \cdots & \frac{1}{n}W^{V,j} \\
  \end{array}
\right)\nonumber\\
&
\left(
  \begin{array}{ccc}
    \mathbf{J}_{LN}(x_{j,1}^{pre}) &  &  \\
     & \ddots &  \\
     &  & \mathbf{J}_{LN}(x_{j,n}^{pre}) \\
  \end{array}
\right)
\end{align}
If $l$ is  sufficiently large, the norm of $\mathbf{J}_{LN}(x_{j,i}^{pre})$ and $\mathbf{J}_{LN}(x^{pre,3}_{j,i})$ are very small (of order $\mathcal{O}(\frac{1}{\sqrt{j}})$) as $j$ is between $l+1$ and $L$, which means the eigenvalues of matrix $\frac{\partial x_{j+1}^{pre}}{\partial x_j^{pre,3}}$ and $\frac{\partial x_{j}^{pre,3}}{\partial x_j^{pre}}$ are close to 1. Then we can see that $\mathbb{E}\|\frac{\partial x^{pre}_{j+1}}{\partial x_j^{pre,3}}\|_2$ and $\mathbb{E}\|\frac{\partial x_{j}^{pre,3}}{\partial x_j^{pre}}\|_2$ are nearly 1, and the norm of $\frac{\partial \tilde{\mathcal{L}}}{\partial W^{2,l}}$ for pre-LN transformer is independent of $l$ when $l$ is large. 
\begin{figure*}[htbp]
\centering

\label{fig:v1-small-update}
\begin{minipage}[t]{1.0\linewidth}
\centering
\includegraphics[width=\linewidth]{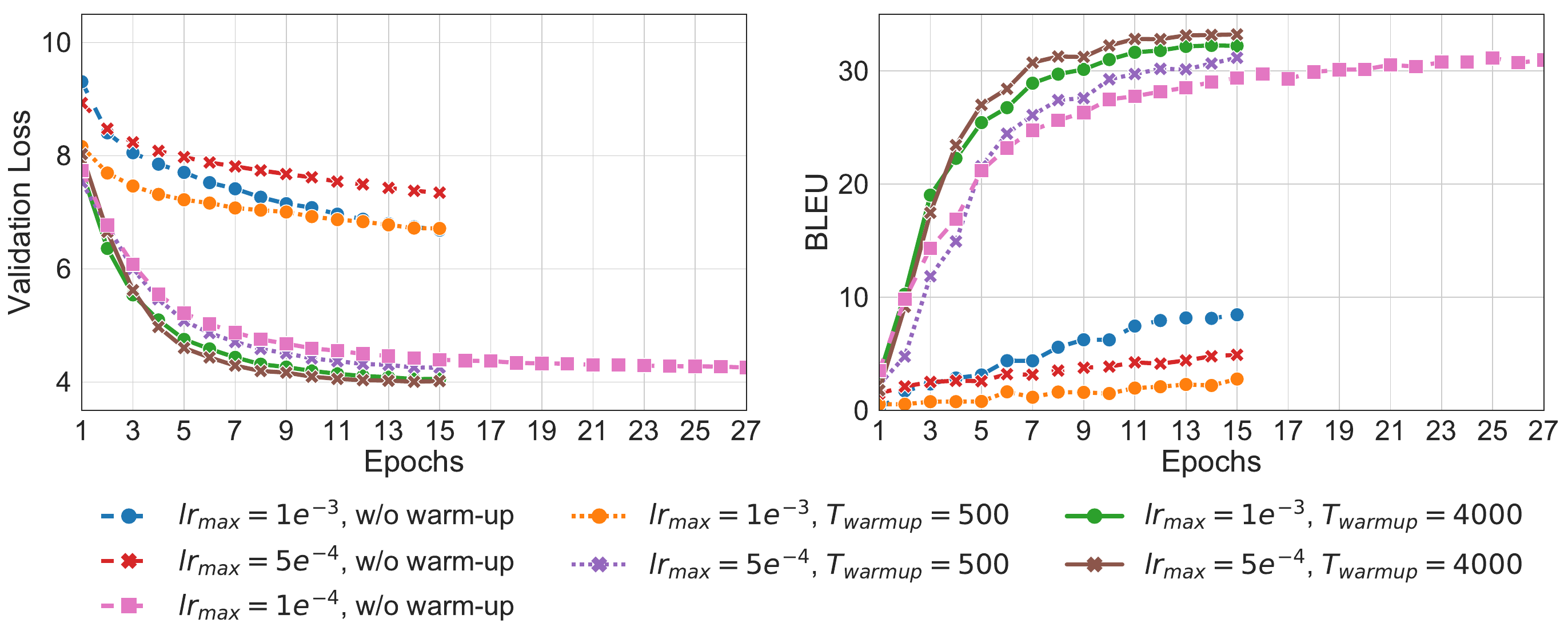}
\end{minipage}
\caption{Performances of the models on the IWSLT14 De-En task.}
\end{figure*}

\section{Examples of $(\epsilon,\delta)$-bounded random variables}
In this section we give an example of $(\epsilon,\delta)$-bounded random variable. This example comes from Example 2.5 in \citep{wainwright2019high} and we give a short description below.

If $Z=(Z_1,...,Z_n)$ is a Gaussian vector with distribution $N(0,I_n)$, then $Y=\|Z\|^2_2=\sum_{k=1}^n Z_k^2$ has distribution $\chi^2_n$. And $\mathbb{E}Y=\sum_{k=1}^n \mathbb{E}Z_k^2=n$

A random variable $X$ with mean $\mu=\mathbb{E}[X]$ is called \emph{sub-exponential} if there are non-negative parameters $(\nu,\alpha)$ such that $\mathbb{E}[\exp(\lambda(X-\mu))]\leq \exp(\frac{\nu^2\lambda^2}{2})$ for all $|\lambda|<\frac{1}{\alpha}$. The next proposition comes from Proposition 2.2 in \citep{wainwright2019high}.
\begin{proposition}[Sub-exponential tail bound]
Suppose that $X$ is sub-exponential with parameters $(\nu,\alpha)$. Then 
\begin{equation}
\mathbb{P}[X-\mu \geq t] \leq\left\{\begin{array}{ll}{\exp(-\frac{t^{2}}{2 \nu^{2}})} & {\text { if } 0 \leq t \leq \frac{\nu^{2}}{\alpha}, \text { and }} \\ {\exp(-\frac{t}{2 \alpha})} & {\text { for } t>\frac{\nu^{2}}{\alpha}}\end{array}\right.
\end{equation}
and from Example 2.5 in \citep{wainwright2019high}, the $\chi^2$ variable $Y$ is sub-exponential with parameters $(\nu, \alpha)=(2 \sqrt{n}, 4)$. So we can derive the one-sided bound 
\begin{equation}
    \mathbb{P}\left[Y-n\geq n\epsilon\right] \leq \exp(-n \epsilon^{2} / 8), \quad \text { for all } \epsilon \in(0,1)
\end{equation}
So $Y$ is $(\epsilon,\delta)$-bounded with $\epsilon \in(0,1)$ and $\delta=\exp(-n\epsilon^2/8)$.

\end{proposition}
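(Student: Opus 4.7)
The plan is to derive the sub-exponential tail bound via the standard Chernoff argument and then specialize to $\chi^2_n$ by bounding its log moment generating function. The proposition has two parts: the abstract tail bound with parameters $(\nu,\alpha)$, and the concrete claim that $Y=\sum_{k=1}^n Z_k^2$ is sub-exponential with $(\nu,\alpha)=(2\sqrt{n},4)$, from which the one-sided $\chi^2$ bound and the $(\epsilon,\delta)$-boundedness both follow.

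For the abstract bound I would begin from Chernoff: for any $\lambda\in[0,1/\alpha)$,
\begin{equation*}
\mathbb{P}[X-\mu\geq t]\leq e^{-\lambda t}\,\mathbb{E}[e^{\lambda(X-\mu)}]\leq \exp\!\bigl(-\lambda t+\tfrac{\nu^2\lambda^2}{2}\bigr),
\end{equation*}
where the second inequality is the defining property of sub-exponentiality. I then minimize $g(\lambda)=-\lambda t+\nu^2\lambda^2/2$ over $[0,1/\alpha)$. Its unconstrained minimizer is $\lambda^\star=t/\nu^2$ with value $-t^2/(2\nu^2)$. If $0\leq t\leq \nu^2/\alpha$ then $\lambda^\star$ is admissible and gives the Gaussian branch $\exp(-t^2/(2\nu^2))$. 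If $t>\nu^2/\alpha$ then $\lambda^\star$ lies outside the admissible set and $g$ is strictly decreasing on $[0,1/\alpha)$, so I push $\lambda\to 1/\alpha$ by a limiting argument (take $\lambda=1/\alpha-\varepsilon$ and let $\varepsilon\to 0$, using continuity of the MGF on the interior). This produces $\exp(-t/\alpha+\nu^2/(2\alpha^2))$, and because $t>\nu^2/\alpha$ forces $\nu^2/(2\alpha^2)<t/(2\alpha)$, the exponential branch $\exp(-t/(2\alpha))$ follows.

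For the $\chi^2_n$ specialization I would compute $\mathbb{E}[e^{\lambda Y}]=(1-2\lambda)^{-n/2}$ for $\lambda<1/2$, so the centered cumulant generating function is
\begin{equation*}
\log\mathbb{E}[e^{\lambda(Y-n)}]=-\lambda n-\tfrac{n}{2}\log(1-2\lambda)=\tfrac{n}{2}\sum_{k=2}^{\infty}\frac{(2\lambda)^k}{k},
\end{equation*}
since the $k=1$ term of the Taylor expansion of $-\log(1-2\lambda)$ cancels $-\lambda n$. For $|\lambda|\leq 1/4$ I bound the remaining series by $\sum_{k\geq 2}|2\lambda|^k/k\leq \tfrac{1}{2}\cdot\tfrac{(2\lambda)^2}{1-|2\lambda|}\leq 4\lambda^2$, using $1/k\leq 1/2$ for $k\geq 2$ together with $1/(1-|2\lambda|)\leq 2$ on that range. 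Combined with nonnegativity of the centered CGF (Jensen), this gives $\log\mathbb{E}[e^{\lambda(Y-n)}]\leq 2n\lambda^2=(2\sqrt{n})^2\lambda^2/2$ for $|\lambda|<1/4$, which is exactly the sub-exponential condition with $(\nu,\alpha)=(2\sqrt{n},4)$. Plugging $t=n\epsilon$ into the first branch of the abstract bound—admissible because $n\epsilon\leq \nu^2/\alpha=n$ iff $\epsilon\leq 1$—yields $\mathbb{P}[Y-n\geq n\epsilon]\leq \exp(-n\epsilon^2/8)$, so $Y$ is $(\epsilon,\delta)$-bounded with $\delta=\exp(-n\epsilon^2/8)$.

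The main obstacle is matching the stated constants $(2\sqrt{n},4)$ cleanly rather than getting something slightly worse. The key is to combine the per-term bound $1/k\leq 1/2$ for $k\geq 2$ with the restriction $|\lambda|\leq 1/4$ so that the geometric tail contributes exactly the factor needed to reach $2n\lambda^2$; a cruder Taylor-tail estimate would degrade either $\nu$ or the admissible $\lambda$-range and would not produce the exponent $\epsilon^2/8$. Everything else—Chernoff, the two-case quadratic minimization, and the boundary-continuity step—is mechanical.
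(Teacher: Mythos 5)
Your proof is correct and follows the same route the paper relies on: the paper simply cites Proposition 2.2 and Example 2.5 of Wainwright (2019) for both the abstract tail bound and the $\chi^2_n$ parameters $(\nu,\alpha)=(2\sqrt{n},4)$, and your Chernoff optimization over $\lambda\in[0,1/\alpha)$ together with the moment-generating-function expansion $-\lambda n-\tfrac{n}{2}\log(1-2\lambda)\leq 2n\lambda^2$ for $|\lambda|\leq 1/4$ is exactly the standard argument behind those cited results. The final substitution $t=n\epsilon$ with $\nu^2/\alpha=n$ matches the paper's conclusion, so you have supplied a self-contained version of what the paper delegates to the reference, with the constants worked out correctly.
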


\section{Small learning rate experiment}

Theoretically, we find that the gradients of the parameters near the output layers are very large for the Post-LN Transformer and suggest using large learning rates to those parameters makes the training unstable. To verify whether using small-step updates mitigates the issue, 
we use a very small but fixed learning rate and check whether it can optimize the Post-LN Transformer (without the learning rate warm-up step) to a certain extent. In detail, we use a fixed learning rate of $1e^{-4}$ at the beginning of the optimization, which is much smaller than the $\text{lr}_{max}= 1e^{-3}$ in the paper. Please note that as the learning rates during training are small, the training converges slowly, and this setting is not very practical in real large-scale tasks. We plot the validation curve together with other baseline approaches in Figure 6. We can see from the figure, the validation loss (pink curve) is around 4.3 in 27 epochs. This loss is much lower than that of the Post-LN Transformer trained using a large learning rate (blue curve). But it is still worse than the SOTA performance (green curve).

\end{document}